\documentclass{ecai}
\usepackage{graphicx}
\usepackage{latexsym}

\usepackage{times}
\usepackage{soul}
\usepackage{url}
\usepackage[utf8]{inputenc}
\usepackage{booktabs}
\usepackage{algorithm}
\usepackage{algorithmic}
\usepackage[switch]{lineno}

\usepackage{balance}
\usepackage{subfigure}

\usepackage{enumerate}
\usepackage[inline]{enumitem}
    \setlist[enumerate]{label = \emph{\roman*})}
\usepackage[most]{tcolorbox}
\usepackage{amsfonts}
\usepackage[hidelinks]{hyperref}
\usepackage[capitalize,noabbrev,nameinlink]{cleveref}
\usepackage{fontawesome5}

\definecolor{tab:blue}{HTML}{1F77B4}
\definecolor{tab:orange}{HTML}{FF7F0E}
\definecolor{tab:green}{HTML}{2CA02C}
\definecolor{tab:red}{HTML}{D62728}
\definecolor{tab:purple}{HTML}{9467BD}
\definecolor{tab:brown}{HTML}{8C564B}
\definecolor{tab:pink}{HTML}{E377C2}
\definecolor{tab:gray}{HTML}{7F7F7F}
\definecolor{tab:olive}{HTML}{BCBD22}
\definecolor{tab:cyan}{HTML}{17BECF}

\newtheorem{lemma}{Lemma}
\newtheorem{assumption}{Assumption}

\DeclareMathOperator{\E}{\mathbb{E}}
\DeclareMathOperator*{\softplus}{softplus}
\definecolor{algorithmiccommentcolor}{gray}{0.5}

\newtcolorbox{leftvrule}[1][]{colback=white,
    boxrule=0pt, boxsep=0pt, breakable, enhanced jigsaw,
    borderline west={1.5pt}{0pt}{black},
before skip=10pt,after skip=10pt,
#1}

\newcommand{\minH}{{\overline{\mathcal{H}}}}
\newcommand{\targ}{{\ensuremath{\odot}}}
\newcommand{\src}{{\ensuremath{\diamond}}}
\newcommand{\pib}{{\ensuremath{\pi_b}}}

\newcommand{\ours}{SaGui}

\graphicspath{{Figures/}}

\usepackage{xcolor}
\begin{document}

\begin{frontmatter}

\title{Reinforcement Learning by Guided Safe Exploration}

\author[A]{\fnms{Qisong}~\snm{Yang}\thanks{Equal contribution.}}
\author[B]{\fnms{Thiago}~D.~\snm{Simão}\thanks{}}
\author[B]{\fnms{Nils}~\snm{Jansen}}
\author[A]{\fnms{Simon}~H.~\snm{Tindemans}}
\author[A]{\fnms{Matthijs}~T.~J.~\snm{Spaan}}

\address[A]{Delft University of Technology -- The Netherlands}
\address[B]{Radboud University, Nijmegen -- The Netherlands}

\begin{abstract}
Safety is critical to broadening the application of reinforcement learning (RL).
Often, we train RL agents in a controlled environment, such as a laboratory, before deploying them in the real world.
However, the real-world target task might be unknown prior to deployment.
Reward-free RL trains an agent without the reward to adapt quickly once the reward is revealed.
We consider the \emph{constrained} reward-free setting, where an agent (the guide) learns to explore safely without the reward signal.
This agent is trained in a controlled environment, which allows unsafe interactions and still provides the safety signal.
After the target task is revealed, safety violations are not allowed anymore.
Thus, the guide is leveraged to compose a safe behaviour policy.
Drawing from transfer learning, we also regularize a target policy (the student) towards the guide while the student is unreliable
and gradually eliminate the influence of the guide as training progresses.
The empirical analysis shows that this method can achieve safe transfer learning and helps the student solve the target task faster.
\end{abstract}

\end{frontmatter}

\section{Introduction}

Despite the numerous achievements of reinforcement learning~(RL)~\cite{sutton2018reinforcement,mnih2015human}, 
safety concerns still prevent the wide adoption of RL~\cite{Dulac-Arnold2021}.
The lack of knowledge about the environment forces standard agents to rely on trial-and-error strategies.
However, this approach is incompatible with safety-critical scenarios \cite{Garcia2015survey}.
For instance, recommender systems should not suggest extremist content~\cite{di2022recommender}.
Constrained Markov decision processes (CMDP) \cite{altman1999cmdp}
express such safety constraints with a cost signal indicating unsafe interactions.
Such costs are decoupled from the rewards to facilitate the learning of safe behaviours.

Developments in safe RL have allowed us to learn safe policies in CMDPs.
For instance, SAC-Lagrangian~\cite{ha2020learning} combines the Soft Actor-Critic (SAC)~\cite{haarnoja2018soft1,haarnoja2018soft2} algorithm with Lagrangian methods to learn a safe policy in an off-policy way.
This algorithm solves high-dimensional problems with a sample complexity lower than on-policy algorithms.
Unfortunately, it only finds a safe policy at the end of the training process and may be unsafe while learning.
In terms of safety, we consider episode-wise constraints instead of step-wise constraints, so a few unsafe actions are allowed in an episode. 

Some knowledge about the safety dynamics can ensure safety during learning.
One can pre-compute unsafe behaviour and mask unsafe actions using a so-called shield \cite{Alshiekh2018,Jansen2020,shield-pomdp}, or start from an initially safe baseline policy and gradually improve its performance while remaining safe \cite{achiam2017cpo,Tessler2019,Yang2020}.
However, these approaches may necessitate numerous interactions with the environment before they find an adequate policy~\cite{Zanger2021}.
Moreover, reusing a pre-trained policy can have a detrimental effect, since the agent encounters a new trajectory distribution as the policy changes~\cite{Igl2021}.
Therefore, we investigate \emph{how to efficiently solve a task without violating the safety constraints}.

We make two key observations.
First, RL agents often learn in a controlled environment, such as a laboratory or a simulator, before being deployed in the real world~\cite{Garcia2015survey}.
Second, an agent can often benefit from expert guidance instead of solely relying on trial and error~\cite{peng2022safe}.
For instance, in autonomous driving, the driver agent can quickly learn by mimicking an expert's behaviour to handle dangerous situations. 
Such a process is referred to as \textit{policy distillation}. 
Furthermore, under expert guidance, the agent can safely explore before taking dangerous actions.

Transfer learning~\cite{Taylor2009transfer} investigates how to improve the learning of a target task with some knowledge from a source task.
In these settings, the source task may provide only partial knowledge of the target task.
We adopt a transfer learning framework and refer to (i) the controlled environment as the \textit{source task}~($\src$) and (ii) the real world as the \textit{target task}~($\targ$).
In our setting, the controlled environment provides only the cost signals related to safety but not the reward signals of the target task in the real world.
The central problem is then to avoid safety violations after the target task has been revealed.

\textbf{Our approach.}
We show how to transfer knowledge encoded by a policy to enhance safety.
Here, we refer to the policy that has been learned in the source task as the \textit{safe guide} (\ours{}, \cref{fig:sagui}).
The intuition is that, in the real world, the agent is guided to accomplish the target task in a safe manner.
We propose to transfer \ours{} from the source task to the target task.
Our approach has three central steps:
\begin{enumerate*}
    \item train the \ours{} policy and \emph{transfer} it to the target task;
    \item \emph{distill} the guide's policy into a \emph{student policy} which is dedicated to the target task, and
    \item \emph{compose} a behaviour policy that balances safe exploration (using the guide) and exploitation (using the student).
\end{enumerate*}

\begin{figure}[tbp]
\centering 
    \resizebox{.84\columnwidth}{!}{\def\svgwidth{240pt}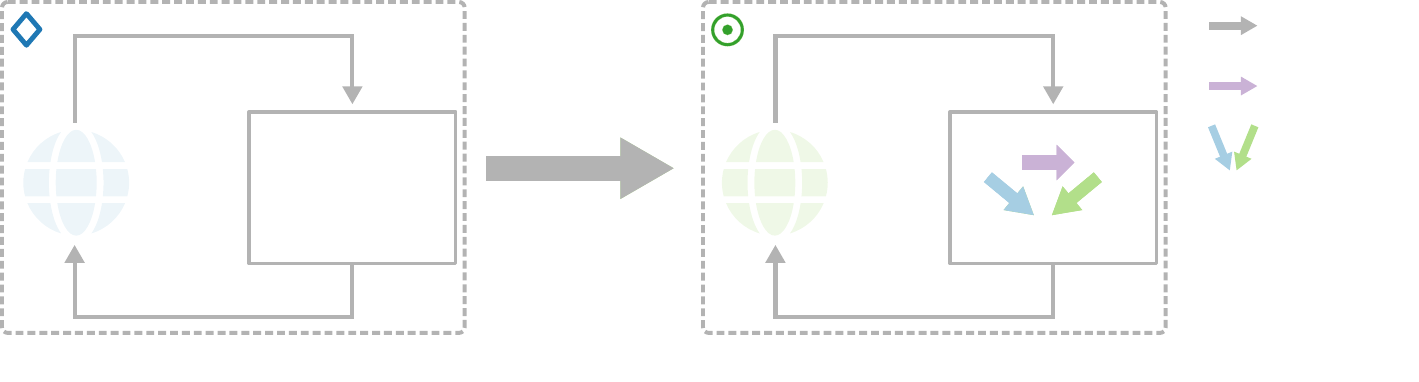}
\caption{Transferring the Safe Guide (SaGui) policy $\pi^\src$ from the source task~($\src$) to the target task~($\targ$) with three steps.
    } 
\label{fig:sagui} 
\end{figure}

As we train the guide in a reward-free constrained RL setting \cite{DBLP:conf/icml/MiryoosefiJ22}, the agent only observes the costs related to safety, and it does not access reward signals.
This task-agnostic approach allows us to train a guide independently of the reward of the target task, so this guide can be useful for different reward functions.
Furthermore, we assume the source task preserves the dynamics related to safety, which allows us to train a guide that can act safely when transferred to the target task.
Inspired by advances in robotics where an agent is trained under strict supervision, we assume the source task is a simulated/controlled environment~\cite{Schuitema2010,Xie2019}.
Therefore, safety is not required while training the \ours{} policy.
Once the target task is revealed, \ours{} safely collects the initial trajectories in the target environment and the student starts learning based on these trajectories.
To ensure that the new policy quickly learns how to act safely, we also employ a policy distillation method, encouraging the student to imitate the guide.

\textbf{Contributions.}
Our main contributions are: we
\begin{enumerate*}
    \item formalize transfer learning for RL from a safety perspective;
    \item propose to guide learning using a task-agnostic agent with exploration benefits;
    \item show how to adaptively regularize the student policy to the guide policy based on the student's safety;
    \item investigate when to sample from the student or from the guide to ensure safe behaviour in the target environment and fast convergence of the student policy; and
    \item demonstrate empirically that, compared to learning from scratch and adapting a pre-trained policy, our method can solve the target task faster without violating the safety constraints in the target task.
\end{enumerate*}

\section{Related Work}
\label{sec:related-work}

Safe RL has multiple facets \cite{Garcia2015survey}, ranging from alternative optimization criteria \cite{Yang2021,chow2017risk} to safe exploration based on some prior knowledge  \cite{achiam2017cpo,Alshiekh2018,Jansen2020,Sui2015,yang2021accelerating,Simao2021alwayssafe}.
We review methods to train the guide and solve new tasks using a pre-trained policy.

Multiple algorithms have been proposed for generalizing policies from reward-free RL for better performance in target tasks \cite{Zhang2020,gimelfarb2021risk,srinivasan2020learning}.
However, only a few works have considered reward-free RL with constraints~\cite{DBLP:conf/icml/MiryoosefiJ22,Savas2018}.
They focus on tabular and linear settings while we consider general function approximation algorithms.

Work in transfer learning has leveraged meta-RL \cite{finn2017model} for safe adaptation \cite{Grbic2020,luo2021mesa,Lew2020safe}.
Our work is also related to curriculum learning  \cite{bengio2009curriculum,turchetta2020safe,marzari2021curriculum}.
We first train an agent to be safe and later solve a target task.
However, our approach focuses on safe exploration and is able to transfer to tasks with different reward functions, so the guide's training is ignored. 

Our work resembles certain safe transfer-RL frameworks~\cite{karimpanal2020learning,yang2021accelerating}, which also leverage prior knowledge to aid learning in a target task. However, the SaGui framework differs from them in terms of safety definition, knowledge acquisition in the source task, or knowledge usage in the target task.
Our prior knowledge is more effective for various downstream tasks, and SaGui is the only framework that is safe while learning in the target task.

\section{Background}

We formalize the safe RL problem and describe typical approaches.

\subsection{Constrained Markov Decision Processes}

We consider tasks formulated by constrained Markov decision processes (CMDPs) \cite{altman1999cmdp,Borkar2005}. A CMDP is defined as a tuple $\mathcal{M} = \langle \mathcal{S}, \mathcal{A}, \mathcal{P}, r, c, d, \gamma\rangle$: 
a state space $\mathcal{S}$,
an action space $\mathcal{A}$,
a probabilistic transition function $\mathcal{P}\colon \mathcal{S} \times \mathcal{A} \mapsto \mathit{Dist}(\mathcal{S})$,
a reward function $r\colon \mathcal{S} \times \mathcal{A} \mapsto [r_{min}, r_{max}]$,
a cost function $c\colon \mathcal{S} \times \mathcal{A} \mapsto [c_{min}, c_{max}]$,
a safety threshold~$d \in \mathbb{R}^+$, and
a discount factor $\gamma \in [0,1)$.
We also consider an initial state distribution $\iota\colon \mathcal{S} \mapsto [0,1]$.
In a \emph{constrained RL} problem, an agent interacts with a CMDP without knowledge about the transition, reward, and cost functions, generating a trajectory $\tau = \langle (s_0,a_0,r_0,c_0,s_0'),(s_1,a_1,r_1,c_1,s_1'),\cdots \rangle$.
A trajectory starts from $s_0\sim \iota(\cdot)$. Then, at each timestep $t$ the agent is in a state $s_t \in \mathcal{S}$, and takes an action $a_t \in \mathcal{A}$. It subsequently gets a reward $r_t = r(s_t, a_t)$, a cost $c_t = c(s_t, a_t)$, and steps into a new state $s_{t}' \sim \mathcal{P}(\cdot\mid s_t,a_t)$.
This process repeats starting from $s_{t+1}=s_t'$ until a terminal condition is met and a new trajectory starts.
The goal is to learn a policy $\pi$ that maximizes the expected discounted return such that the expected discounted cost-return remains below $d$:
\newcommand{\sampletraj}{\ensuremath{\rho_{\pi}}}
\begin{equation}
\begin{aligned}
    \max_\pi & \E_{\sampletraj} \left[ \sum_{t=0}^{\infty} \gamma^t r_t \right] 
\quad {\rm s.t.} \quad  \E_{\sampletraj} \left[\sum_{t=0}^{\infty} \gamma^t c_t\right] \leq d,
\end{aligned}
\label{eq:CMDP}
\end{equation}
where $\rho_{\pi}$ indicates the trajectory distribution induced by $s_0 \sim \iota(\cdot)$, $a_t \sim \pi(\cdot\mid s_t)$, and $s_{t+1} \sim \mathcal{P}(\cdot\mid s_t,a_t)$.
We define the discounted \textit{return} starting from $s,a$ and following $\pi$ as
$
    Q_{\pi}^{r}(s,a) = \E_{\sampletraj} \left[\sum^{\infty}_{t=0} \gamma^t  r_t \middle| s_0 = s, a_0 = a\right],
$
and, similarly, the discounted \textit{cost-return}  $Q_{\pi}^{c}(s,a)$.

From the safe RL perspective, if a policy has an expected cost-return lower than the safety-threshold~$d$, then this policy is considered safe.
Therefore, the objective of a safe RL agent is to find a policy, among the safe policies, that has the highest expected return.

\subsection{Maximum Entropy Reinforcement Learning}
\label{sec:MERL}

A common strategy to improve the exploration and robustness of RL agents is to favour policies that induce diverse behaviours \cite{ziebart2010modeling,eysenbach2021maximum}.
We can incorporate it in the safe RL main objective by augmenting the problem with a term that aims to maximize the policy entropy~\cite{Haarnoja2017}:
\begin{equation}
    \max_{\pi} \E_{\sampletraj} \left[ \sum_{t=0}^{\infty} \gamma^t \left(r_t {+} \alpha \mathcal{H}(\pi(s_t)) \right) \right]
\textrm{ s.t.} \E_{\sampletraj} \left[ \sum_{t=0}^{\infty} \gamma^t c_t \right] {\leq} d,
\label{eq:SafeMERL}
\end{equation}
where $\mathcal{H}(\cdot)$ is the entropy of a distribution over a random variable, and $\alpha$ is the entropy weight.
In general, this objective encourages the agent to use maximally stochastic policies.
Alternatively, we can encourage the policy to have at least a minimum entropy $\minH$ \cite{haarnoja2018soft2} by adding the following constraint to \eqref{eq:CMDP}:
\begin{equation}
    \E_{\sampletraj} \left[- \log(\pi(a_t\mid s_t))  \right] \geq \minH, \quad \forall t,
    \label{eq:constraint_minimum_entropy}
\end{equation}
where $\minH$ is the given entropy threshold to ensure a minimum degree of randomness.
This approach allows the policy to converge to a more deterministic behaviour than \eqref{eq:SafeMERL}.
Besides, it only requires the system's designer to define $\minH$ and it lets the RL agent automatically find a trade-off between the policy's entropy and rewards.
Therefore, $\alpha$ becomes an intrinsic parameter of the RL algorithm.

\begin{figure*}[tbp]
    \centering
    \hfill
    \subfigure[Unsafe transfer]{\label{subfig:safe_transfer_metrics}
        \resizebox{0.285\textwidth}{!}{\def\svgwidth{200pt}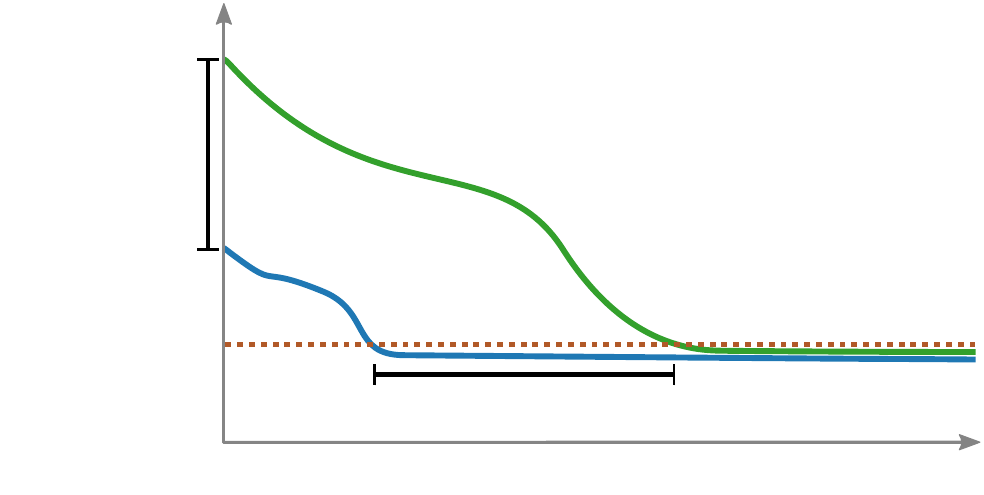}
    }\hfill
    \subfigure[Fully safe transfer.]{\label{subfig:safe_transfer}
        \resizebox{0.285\textwidth}{!}{\def\svgwidth{200pt}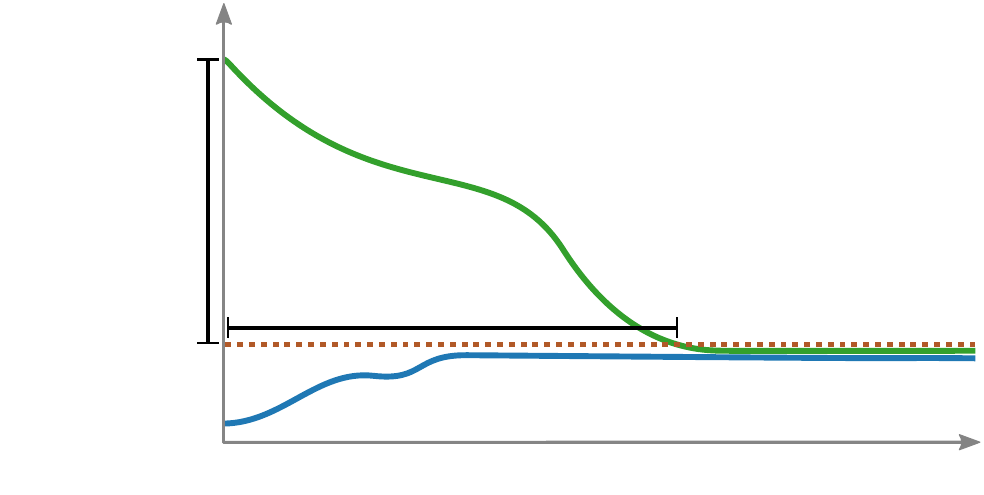}
    }\hfill
    \subfigure[Return transfer.]{\label{subfig:return_transfer}
        \resizebox{0.285\textwidth}{!}{\def\svgwidth{200pt}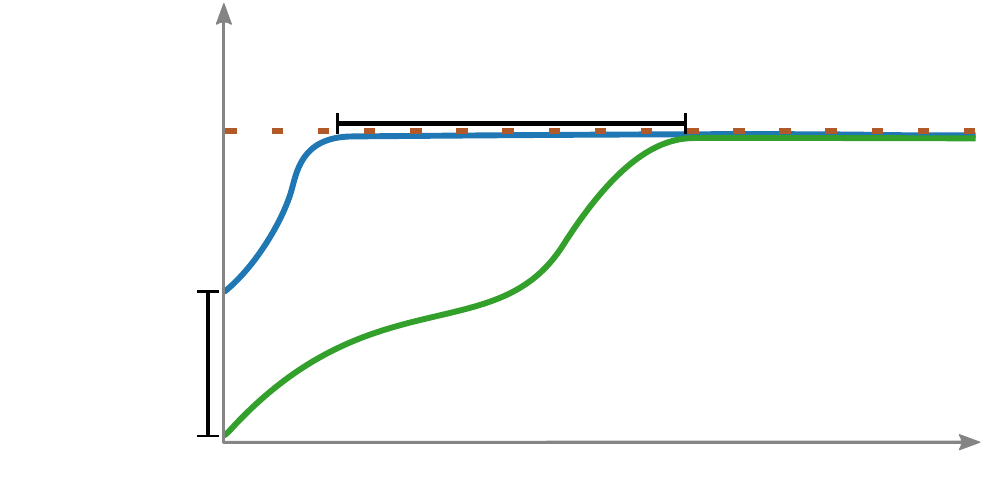}
    }\hfill
    \caption{Transfer metrics for safe reinforcement learning. Usually, we consider \textit{safety jump-start} and \textit{$\Delta$~time to safety}. If we can develop agents that learn without violating the safety requirements, we can also consider \emph{return jump-start} and \textit{$\Delta$~time to optimum}.}
\end{figure*}

The maximum entropy RL with safety constraint \eqref{eq:SafeMERL} can be solved by the SAC-Lagrangian (SAC-$\lambda$)~\cite{ha2020learning} method.
SAC-$\lambda$ is a SAC-based method that has two critics and uses an adaptive entropy weight $\alpha$ (parameterized by $\theta_\alpha$) and an adaptive safety weight $\beta$ (parameterized by $\theta_\beta$) to manage a trade-off among exploration, reward, and safety. 
The reward critic estimates the expected return $Q^r$ (parameterized by $\theta_R$), possibly with an entropy bonus to promote exploration, while the safety critic estimates the cost-return~$Q^c$ (parameterized by $\theta_C$) to encourage safety. The policy~$\pi$ is parameterized by $\theta_\pi$.
\cref{app:saclag} provides a detailed description of how to learn each component, including the losses.
Throughout the paper, we represent learning rates with $\eta$, replay buffers with $\mathcal{D}$, and losses with $J$.
We only update $\alpha$ when a desirable $\minH$ is given, so $\alpha$ is fixed whenever we use the formulation \eqref{eq:SafeMERL}.

\section{Safe and Efficient Exploration}

Naturally, to train RL agents without violating the safety constraints, some prior knowledge is required \cite{Sui2015}.
Often, a safe initial policy collects the initial trajectories \cite{achiam2017cpo,Tessler2019,Yang2020}.
However, these approaches largely neglect how this policy is computed or what makes it effective.
Therefore, we consider the problem of how to obtain an initial policy that can safely expedite learning in the target task.
Next, we formalize the problem and provide an overview of our approach.

\subsection{Problem Setting}

We formalize our problem setting using the transfer learning (TL) framework.
In general, TL allows RL agents to use expertise from \textit{source} tasks to speed up the learning process on a \textit{target} task~\cite{Taylor2009transfer,zhu2020transfer}.
The source tasks $\{\mathcal{M}^\src\}$ should provide some knowledge~$\mathcal{K}^\src$ to an agent learning in the target task $\mathcal{M}^\targ$, such that, by leveraging $\mathcal{K}^\src$, the agent learns the target task $\mathcal{M}^\targ$ faster.

As we are particularly interested in the safety properties of the transfer, we consider a reward-free source task, which only provides information about the safety dynamics.
Moreover, we use a policy to encode the knowledge transferred.
Formally, given a source task $\mathcal{M}^\src = \langle \mathcal{S}^\src, \mathcal{A}^\src, \mathcal{P}^\src, \emptyset, c^\src, d^\src, \iota^\src, \gamma \rangle$, we compute the policy $\pi^\src$ in the absence of a reward signal.
This provides knowledge~$\mathcal{K}^\src = \{\pi^\src\}$ to help solving the target task $\mathcal{M}^\targ = \langle \mathcal{S}^\targ, \mathcal{A}^\targ, \mathcal{P}^\targ, r^\targ, c^\targ, d^\targ, \iota^\targ, \gamma \rangle$.

To apply the source policy $\pi^\src$ in the target task $\mathcal{S}^\targ$, we have a mapping from the source state space to the target state space $\Xi: \mathcal{S}^\targ \rightarrow \mathcal{S}^\src$.
Then, we can define a target policy $\pi^{\src \rightarrow \targ}$ as follows:
$
    \pi^{\src \rightarrow \targ}(s)
    =
    \pi^\src(\Xi(s)).
$
Furthermore, we assume the source task $\mathcal{M}^\src$ and target task $\mathcal{M}^\targ$ share the same action space.
\cref{app:state_abstraction} describes how to obtain the source task based on $\Xi$ and the target task.

\begin{assumption}\label{a:shared_action}
$\mathcal{A}^\src=\mathcal{A}^\targ = \mathcal{A}$.
\end{assumption}

To enable the knowledge transferable between tasks, having the same action spaces ensures that the policy learned in the source task is directly applicable to the target task.

\subsection{Transfer Metrics}
\label{sec: transfer-metrics}

To evaluate a safe transfer RL algorithm, \cref{subfig:safe_transfer_metrics} presents a schematic of metrics related to safety (inspired by transfer in RL~\cite{Taylor2009transfer}):
\textit{safety jump-start} indicates how much closer to the safety threshold the expected cost-return of an agent learning with the source knowledge is compared to the expected cost-return of an agent learning from scratch in the first episodes, and 
\textit{$\Delta$~time to safety} is the difference in the number of interactions required to become safe.

Notice that a trained agent might start with an expected cost-return lower than the safety threshold, for instance, when the safety threshold in the source task is lower than in the target task (\cref{subfig:safe_transfer}).
In this case, \textit{safety jump-start} would be the difference between the safety threshold and the cost-return of an agent learning from scratch.
Similarly, the \textit{$\Delta$~time to safety} would be the number of interactions an agent learning from scratch needs to become safe.

In the case of two methods that can solve the target task without violating the safety constraints, we can also consider the usual metrics of transfer learning with respect to the reward \cite{Taylor2009transfer}.
For instance, \cref{subfig:return_transfer} shows the initial improvement in terms of performance which we call \emph{return jump-start}, and the time necessary to reach an optimum performance, which we call the \textit{$\Delta$~time to optimum}.

\paragraph{Problem statement.}
\label{sec:problem-setting}
        We aim to maximize the \textit{safety jump-start} (potentially preventing safety violations in the target task) and to reduce the \textit{time to optimum }(improving exploration) when transferring a policy~$\pi^\src$ from a source task~$\mathcal{M}^\src$ to a target task~$\mathcal{M}^\targ$.

\subsection{Method Overview}

\label{method-overview}

Recall that for our transfer setting, we consider a single source task that only provides the safety signals, which we use to train the guide.
Without the reward signal, the guide aims to explore the world safely and efficiently.
We are interested in using the guide's safe exploration capabilities to train the student on the target task without violating the safety constraints.
Notably,
\textit{i}) the guide and the student are trained separately;
\textit{ii}) the guide is only trained once and can support the training of different students; and
\textit{iii}) the guide only has access to safety information and no knowledge about the student's task.

To ensure the source policy is safe when deployed in the target task, we assume that the source task has a safety threshold lower than or equal to the target task, and $\Xi$ is a state abstraction that preserves the safety dynamics, as formalized next.

\begin{assumption}\label{a:cost}
The safety threshold of the target task upper bounds the safety threshold of the source task:
$d^\src \leq d^\targ$.
\end{assumption}

\begin{assumption}\label{a:abstraction}
$\Xi$ is a $Q_{\pi}^{c}$-irrelevance abstraction~\cite{Li2006}, therefore
\[
\Xi(s) {=} \Xi(s')
        \Rightarrow Q_{\pi^\targ}^{c}(s, a) = Q_{\pi^\targ}^{c}(s', a), 
    \forall s, s' \in \mathcal{S}^\targ, a \in \mathcal{A}, \pi^\targ.
\]
\end{assumption}
\noindent
Now, we can connect the expected cost-return of a policy on the source task to the expected cost-return on the target task.
\begin{lemma}\label{lem:equal_cost}
Given \cref{a:shared_action} and \cref{a:abstraction}, we have
\[
    Q_{\pi^\src}^{c,\src}(\Xi(s),a) 
    =
    Q_{\pi^{\src \rightarrow \targ}}^{c,\targ}(s,a)
    \quad \forall s \in \mathcal{S}^\targ, a \in \mathcal{A}, \pi^\src.
\]
That is, the expected cost of a source policy is the same in the source task and in the target task.
\end{lemma}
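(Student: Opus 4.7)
The plan is to use the uniqueness of the fixed point of the Bellman equation for the cost-value function. Specifically, I will show that the quantity on the right-hand side, viewed as a function of abstract state and action, satisfies the Bellman equation for $Q_{\pi^\src}^{c,\src}$ on the source CMDP, and then invoke uniqueness of the fixed point to conclude equality.

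First, I would invoke \cref{a:abstraction} with $\pi^\targ = \pi^{\src\rightarrow\targ}$ to deduce that $Q_{\pi^{\src\rightarrow\targ}}^{c,\targ}(s,a)$ depends on $s$ only through $\Xi(s)$. This lets me define an auxiliary function $\bar{Q}\colon \mathcal{S}^\src \times \mathcal{A} \to \mathbb{R}$ by $\bar{Q}(\bar{s},a) = Q_{\pi^{\src\rightarrow\targ}}^{c,\targ}(s,a)$ for any $s \in \mathcal{S}^\targ$ with $\Xi(s) = \bar{s}$. Note also that since $\pi^{\src\rightarrow\targ}(s) = \pi^\src(\Xi(s))$, the action distribution at any $s$ is determined by $\Xi(s)$.

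Second, I would expand the Bellman equation for $Q_{\pi^{\src\rightarrow\targ}}^{c,\targ}$ on the target CMDP:
\[
Q_{\pi^{\src\rightarrow\targ}}^{c,\targ}(s,a) = c^\targ(s,a) + \gamma \sum_{s'} \mathcal{P}^\targ(s'\mid s,a) \sum_{a'} \pi^{\src\rightarrow\targ}(a'\mid s') Q_{\pi^{\src\rightarrow\targ}}^{c,\targ}(s',a').
\]
Using the construction of $\mathcal{M}^\src$ from $\Xi$ described in \cref{app:state_abstraction} (which defines $c^\src$ and $\mathcal{P}^\src$ as the natural projections onto abstract states compatible with the abstraction), the right-hand side can be rewritten as a function of $\Xi(s)$ and $a$ only, matching the Bellman equation
\[
\bar{Q}(\bar{s},a) = c^\src(\bar{s},a) + \gamma \sum_{\bar{s}'} \mathcal{P}^\src(\bar{s}'\mid \bar{s},a) \sum_{a'} \pi^\src(a'\mid \bar{s}') \bar{Q}(\bar{s}',a').
\]
This step is where \cref{a:shared_action} is used (so that $a$ and $a'$ range over the same set in both tasks) and where \cref{a:abstraction} is used again to collapse the sum over $s'$ into a sum over $\bar{s}'$.

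Finally, since $Q_{\pi^\src}^{c,\src}$ is the unique fixed point of the source Bellman operator $T_{\pi^\src}^{c,\src}$ on a discounted CMDP ($\gamma<1$), and $\bar{Q}$ satisfies the same equation, we conclude $\bar{Q} \equiv Q_{\pi^\src}^{c,\src}$, which yields the claim.

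The main obstacle is the second step: I need the appendix's construction of $\mathcal{M}^\src$ to be consistent with $\Xi$ being a $Q^c$-irrelevance abstraction in the precise sense that the aggregated cost and transition probabilities, when weighted by the trajectory distribution induced by any target policy of the form $\pi^\src \circ \Xi$, are well-defined on abstract states. Assuming the appendix defines $c^\src(\bar{s},a)$ and $\mathcal{P}^\src(\bar{s}'\mid \bar{s},a)$ as the natural lifts (e.g.\ weighted by the stationary or on-policy distribution over $\Xi^{-1}(\bar{s})$), the $Q^c$-irrelevance hypothesis guarantees these lifts are unambiguous; otherwise one would need an additional model-irrelevance-style assumption. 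Modulo this identification, the rest of the argument is standard Bellman-fixed-point reasoning.
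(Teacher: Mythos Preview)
Your approach is correct and takes a genuinely different route from the paper. The paper argues by induction on an integer parameter $T$: it introduces a hybrid value $Q^{c}_{T,\pi}$ corresponding to running $T$ steps in the source CMDP and then continuing in the target CMDP, and shows inductively that $Q^{c}_{T,\pi}$ coincides with $Q^{c,\targ}_{\pi}$ for every $T$ (the base case $T{=}1$ is exactly the weighted-average-collapses-by-$Q^c$-irrelevance step; the inductive case unrolls one source transition and appeals to the hypothesis). Your Bellman fixed-point argument is more direct and shorter: you show once that $\bar Q$ satisfies the source Bellman equation and close with the contraction property. Regarding your stated obstacle, the appendix defines $c^\src$ and $\mathcal{P}^\src$ via an \emph{arbitrary} weighting function $w$ summing to one on each fibre $\Xi^{-1}(\bar s)$; no additional model-irrelevance assumption is needed. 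Concretely, plugging these definitions into the source Bellman operator applied to $\bar Q$ and pulling the outer sum over $s\in\Xi^{-1}(\bar s)$ outside yields $\sum_{s} w(s)\,\bar Q(\Xi(s),a)=\bar Q(\bar s,a)$ by $Q^c$-irrelevance and $\sum_s w(s)=1$, which is precisely the computation in the paper's $T{=}1$ base case. What your route buys is concision and a standard contraction argument; what the paper's induction buys is an explicit step-by-step unrolling, though it leaves implicit the identification of $\lim_{T\to\infty}Q^{c}_{T,\pi}$ with $Q^{c,\src}_{\pi^\src}$.
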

\begin{proof}
\cref{sec:proof_lemma} provides the proof.
\end{proof}

\begin{theorem}
If $~\Xi$ is a $Q_{\pi}^{c}$-irrelevant state abstraction, then any policy that is safe on the source task $\mathcal{M}^\src$ is also safe when deployed on the target task $\mathcal{M}^\targ$.
\end{theorem}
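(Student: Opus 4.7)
The plan is to chain Lemma~1 with Assumption~2 and the definition of safety. Let $\pi^\src$ be any policy that is safe in $\mathcal{M}^\src$; by definition this means the expected discounted cost-return, taken over $s_0 \sim \iota^\src$ and $a_0 \sim \pi^\src(\cdot \mid s_0)$, does not exceed $d^\src$. I would first rewrite this expectation in $Q^c$-form, i.e.\
\[
\E_{s_0 \sim \iota^\src,\, a_0 \sim \pi^\src(\cdot\mid s_0)} \left[ Q^{c,\src}_{\pi^\src}(s_0,a_0) \right] \leq d^\src.
\]
Then I would deploy $\pi^\src$ in the target task through the lift $\pi^{\src\to\targ}(s) = \pi^\src(\Xi(s))$ and write the analogous expression for the target task, namely $\E_{s_0 \sim \iota^\targ,\, a_0 \sim \pi^{\src\to\targ}(\cdot\mid s_0)}\bigl[ Q^{c,\targ}_{\pi^{\src\to\targ}}(s_0,a_0) \bigr]$.

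The key step is to apply Lemma~1 pointwise: for every target state $s$ and action $a$, the cost-return of $\pi^{\src\to\targ}$ in the target task equals the cost-return of $\pi^\src$ in the source task at the abstracted state $\Xi(s)$. This turns the target expectation into $\E_{s_0 \sim \iota^\targ,\, a_0 \sim \pi^\src(\cdot\mid \Xi(s_0))}\bigl[ Q^{c,\src}_{\pi^\src}(\Xi(s_0),a_0) \bigr]$. At this point the push-forward of $\iota^\targ$ under $\Xi$ plays the role of the source initial distribution, so by the construction of $\mathcal{M}^\src$ from $\Xi$ and $\mathcal{M}^\targ$ (as given in the appendix referenced in the setup), this equals the source-task expectation. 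Concluding, that quantity is at most $d^\src$ by the safety of $\pi^\src$, and at most $d^\targ$ by Assumption~2, so $\pi^{\src\to\targ}$ is safe on $\mathcal{M}^\targ$.

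The main obstacle I anticipate is bookkeeping around the initial state distribution: Lemma~1 only equates $Q^c$-values at matched states, but the theorem concerns the full expected cost-return under $\iota^\targ$. Making this rigorous requires that $\iota^\src$ be the $\Xi$-pushforward of $\iota^\targ$, which is implicit in how the source CMDP is constructed from the target CMDP via $\Xi$ (\cref{app:state_abstraction}); I would invoke that construction explicitly rather than treating it as obvious. Everything else is a short chain of (in)equalities, so the proof should be only a few lines once Lemma~1 and Assumption~2 are both in hand.
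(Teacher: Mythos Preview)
Your proposal is correct and uses the same two ingredients as the paper (Lemma~1 and Assumption~2), but the framing differs. The paper's proof is a single pointwise chain
\[
Q_{\pi^{\src \rightarrow \targ}}^{c,\targ}(s,a)
    \;=\; Q_{\pi^{\src}}^{c,\src}(\Xi(s),a)
    \;\leq\; d^\src
    \;\leq\; d^\targ,
\]
i.e.\ it reads the premise ``$\pi^\src$ is safe'' as a state--action bound $Q^{c,\src}_{\pi^\src}\le d^\src$ rather than as the initial-state expectation in~\eqref{eq:CMDP}. That pointwise reading makes the initial-distribution issue you flag disappear entirely, at the cost of silently using a stronger notion of safety than the one formally defined. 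Your expectation-level argument is more faithful to the stated definition, and the bookkeeping you anticipate (matching $\iota^\targ$ to $\iota^\src$ via $\Xi$) is exactly the extra work that route requires; note, though, that the construction in \cref{app:state_abstraction} defines $\iota^\src$ with the weighting function $w$, not as the raw pushforward $\Xi_*\iota^\targ$, so you would need $Q^c$-irrelevance once more to argue the two expectations coincide.
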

\begin{proof}~\\
$
Q_{\pi^{\src \rightarrow \targ}}^{c,\targ}(s,a)
    \overset{\text{\cref{lem:equal_cost}}}{=} Q_{\pi^{\src}}^{c,\src}(\Xi(s),a) \label{eq:step1} 
    \overset{\text{Premise}}{\leq} d^\src 
    \overset{\text{\cref{a:cost}}}{\leq} d^\targ.
$\qedhere
\end{proof}

It is important to note, however, that the reward function $r^\targ$ in the target task may be unrelated to the state space of the source task~$\mathcal{S}^\src$. 
Therefore, although a policy that is safe on the source task is also safe on the target task, the behaviour required to accomplish the target task may not be defined on the source task.
Consider, for instance, an agent with access to its position and the position of a threat.
In each target task, the agent might need to visit a different goal position, which is not defined in the source task.
Then, a safe policy may be conditioned only on the positions of the agent and the threat, but to achieve the target, the agent must consider the goal position.
This highlights the need to compute a policy dedicated to the target task.

\section{Guided Safe Exploration}

In this section, we consider how to train the \textit{safe guide}  (\ours{}) policy.
Then, we describe how the student learns to imitate the \ours{} policy after the task is revealed while learning to complete the target task.
Finally, we investigate how to prevent safety violations while the student has not yet learned how to act safely.

\subsection{Training the Safe Guide}
\label{app:train_sagui}

Since the source task does not provide information regarding the reward of the target task, we adopt a reward-free exploration approach to train the guide.
To efficiently explore the world, we first consider maximizing the policy entropy under safety constraints. Then, we can solve the problem defined in Equation~\ref{eq:SafeMERL} with $r(s,a) = 0: \forall s \in \mathcal{S}, a \in \mathcal{A}$ to get a guide \textsc{MaxEnt}.
However, although \textsc{MaxEnt} tends to have diverse behaviours, that does not imply efficient exploration of the environment.
Especially for continuous state and action spaces, it is possible that a policy provides limited exploration even if it has high entropy.

To enhance the exploration of the guide, we adopt an auxiliary reward that motivates the agent to visit novel states.
To measure the novelty, we first define the metric space $(\mathcal{S}^\ddagger, \delta)$, where $\mathcal{S}^\ddagger$ is an abstracted state space and $\delta: \mathcal{S}^\ddagger \times \mathcal{S}^\ddagger \rightarrow [0,\infty)$ is a distance function:
\begin{align*}
        \delta(s,s') = 0 &\Leftrightarrow s=s',\\
        \delta(s,s') &= \delta(s',s), \text{and}\\
        \delta(s',s'') &\leq \delta(s,s') + \delta(s,s''), & \forall s,s',s'' \in \mathcal{S}.
\end{align*}
Note that $\mathcal{S}^\ddagger$ may not be the original state space $\mathcal{S}$. Especially when $\mathcal{S}$ is high-dimensional, $\mathcal{S}^\ddagger$ can be some selected dimensions from $\mathcal{S}$, or a latent space from representation learning.
Next, we define the auxiliary rewards as the expected distance between the current state and the successor state:
\begin{align}
        r^\delta_t(s_t, a_t) = \E_{s_{t+1}\sim \mathcal{P}(\cdot \mid s_t, a_t)}\left[\delta(f^\ddagger(s_t),f^\ddagger(s_{t+1}))\right],
    \label{eq:DefAuxRew}
\end{align}
where we may apply a potential abstraction $f^\ddagger: \mathcal{S} \rightarrow \mathcal{S}^\ddagger$. So, we train the \textit{guide} agent by solving the constraint optimization problem \eqref{eq:SafeMERL} based on the auxiliary reward $r^\delta$.
Then, we can use SAC-$\lambda$ directly employed to solve \eqref{eq:SafeMERL}, as \cref{alg:safe-explorer} shows (\cref{app:saclag}). In future research, we will also investigate different distance functions to understand their effects on exploration. 

This auxiliary reward does not explicitly promote exploration, but we find that increasing the step size and policy entropy significantly improves exploration in practice.
Overall, our experiment with the auxiliary reward aimed to evaluate the impact of the exploration of the guide on how safely and quickly the student learns.

We could also consider more sophisticated reward-free exploration strategies such as maximizing the entropy of the state occupancy distribution~\cite{seo2021state,svidchenko2021maximum,hazan2019provably}.
We leave this as future work and focus on using the guide to improve how the student learns.

\begin{algorithm}[t]
\caption{Guided Safe Exploration}
\label{alg:safetransfer}
\textbf{Input}: $\mathcal{M}^\targ$, $\pi^\src$, $\minH$, $d$\\
\textbf{Initialize}: $\mathcal{D} \leftarrow \emptyset$, $\theta^\targ_\chi \text{ for } \chi \in \{\pi,R,C,\alpha,\beta\}$ \\
\textbf{Output}: Optimized parameters $\theta^\targ_\pi$ for $\pi^\targ$

\begin{algorithmic}[1] %
\FOR{each iteration}
        \FOR{each environment step} \label{lst:safe_env_step_begin}
        \IF{\emph{linear-decay}} 
            \STATE $b \leftarrow f_{\text{ld}}(\src,\targ)$ \COMMENT{linearly eliminate the effect of $\pi^\src$}
        \ELSIF{\emph{control-switch}} 
            \STATE $b \leftarrow f_{\text{cs}}(\src,\targ)$ \COMMENT{$\pi^\src$ takes control if unsafe}
            \ENDIF
            \STATE $a_t \sim \pib(\cdot \mid s_t)$
            \COMMENT{Composite sampling \eqref{eq:behaviour_policy}}
            \STATE $\mathcal{I}_t \leftarrow \mathcal{I}(s_t, a_t)$   \COMMENT{IS ratio \eqref{eq:ISratio}}
            \STATE $r^\targ_t \leftarrow r^\targ(s_t,a_t)$
            \STATE $r^\src_t \leftarrow \log \pi^\src(a_t \mid \Xi(s_t))$
            \STATE $c^\targ_t \leftarrow c^\targ(s_t,a_t)$
            \STATE $s_{t+1} \sim \mathcal{P}^\targ(\cdot \mid s_t,a_t)$
            \STATE $\mathcal{D} \leftarrow  \mathcal{D} \cup \{(s_t,a_t,r^\targ_t,r^\src_t,c^\targ_t,\mathcal{I}_t,s_{t+1}) \}$ \label{lst:safe_env_step_end}
        \ENDFOR 
        \FOR{each gradient step}
        \label{lst:safe_update_begin}
        \STATE Sample experience from $\mathcal{D}$ \label{lst:sample_experiences}
        \FOR{$\chi \in \{\pi,R,C,\alpha,\beta\}$}
        \STATE $\theta^\targ_\chi \leftarrow \theta^\targ_\chi - \eta_\chi \hat{\nabla}_{\theta^\targ_\chi} \mathcal{I} J_{\chi}(\theta^\targ_\chi)$ \COMMENT{Updating $\theta^\targ_\chi$} \label{lst:updateparameters} 
        \label{lst:safe_update_end}
        \ENDFOR
        \ENDFOR 
        \ENDFOR %
\end{algorithmic}
\end{algorithm}
\subsection{Policy Distillation From the Safe Guide}

When the agent is trained for a certain task, it is difficult to generalize when faced with a new task~\cite{Igl2021}.
Similarly, it is not trivial to adjust the guide's policy that was trained to explore the environment to perform the target task.
Therefore, we train a new policy, referred as the student, dedicated to the target task.

We can leverage the \textit{guide} to quickly learn how to act safely.
Through the mapping function $\Xi$, the transferred policy can be used by most constrained RL algorithms to regularize the student policy~$\pi^\targ$ towards the guide policy $\pi^\src$ using KL divergences, as shown in \cref{fig:RewardShaping}.
So, with $\pi^\src$ fixed, we have an augmented reward function
$
    r'_t = r^\targ_t + \omega r^{\textrm{KL}}_t  + \alpha r^{\mathcal{H}}_t,
$
where $r^{\textrm{KL}}_t = \log \frac{\pi^\src(a_t\mid \Xi(s_t))}{\pi^\targ(a_t\mid s_t)}$ and $r^{\mathcal{H}}_t = -\log \pi^\targ(a_t \mid s_t)$.
The weights $\omega$ and $\alpha$ indicate the strengths of the KL and entropy regularization (respectively).
\cref{app:regularized_reward} shows that setting  $r^\src_t = \log \pi^\src(a_t \mid \Xi(s_t))$ we obtain 
$\omega r^{\textrm{KL}} {+} \alpha r^\mathcal{H} = \omega r^\src + (\omega + \alpha) r^\mathcal{H}$.
Therefore, we can define the student's objective:
\begin{equation}\label{eq:newConOptRewSha}
    \begin{aligned}
    \mathop{\max} \limits_{\pi^\targ} \E_{\tau \sim \rho_{\pi^\targ}}  \sum^{\infty}_{t=0} \gamma^t \Big[ r^\targ_t + \omega r^\src
_t  + (\alpha + \omega) r^{\mathcal{H}}
_t \Big].
    \end{aligned}
\end{equation}

\begin{figure}[tbp] 
\centering 
\resizebox{\columnwidth}{!}
{\def\svgwidth{350pt}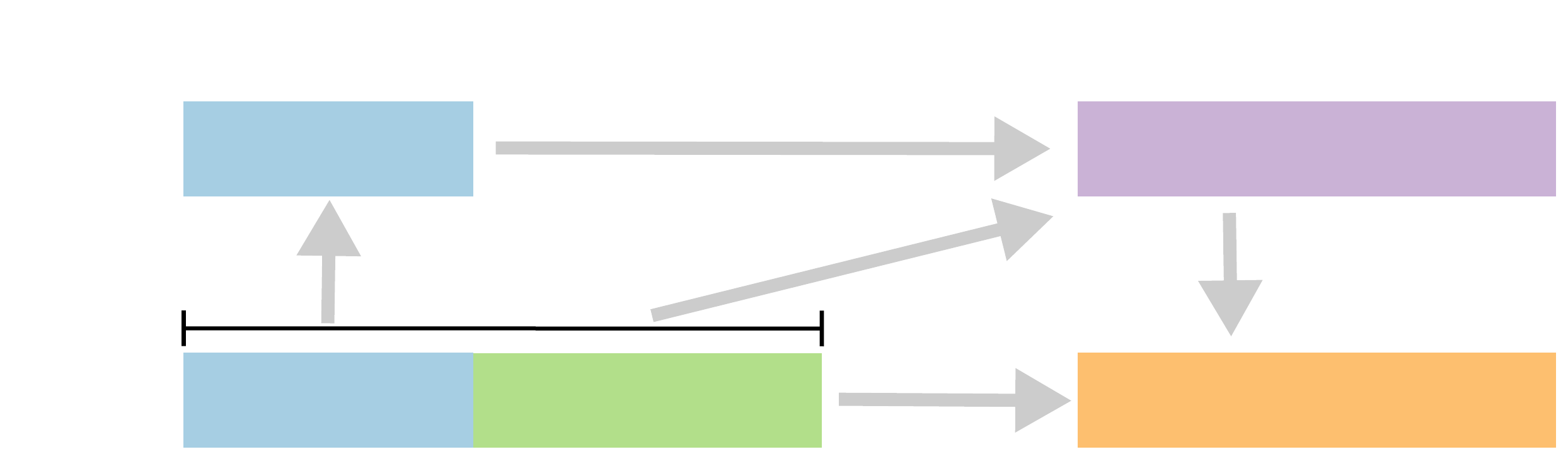}
\caption{Overview of the policy distillation.
Through the mapping function~$\Xi$, the transferred policy can be used to regularize the student policy $\pi^\targ$ towards the guide policy $\pi^\src$.
}
\label{fig:RewardShaping} 
\end{figure}

To find an appropriate $\omega$, our goal is to follow the guide more for safer exploration if the student's policy is unsafe, but eliminate the influence from the guide and focus more on the performance if the student's policy is safe.
Therefore, we propose to set $\omega = \beta$ to determine the strength of the KL regularization since the adaptive safety weight $\beta$ reflects the safety of the current policy.

In summary, we have an entropy regularized expected return with redefined (regularized) reward
$
    r''_t = r^\targ_t+\beta r^\src_t.
$
This augmented reward encourages the student to yield actions that are more likely to be generated by the guide.
Then, SAC-$\lambda$ can be directly used to solve \eqref{eq:newConOptRewSha} with the additional entropy constraint (\cref{alg:safetransfer}, lines~\ref{lst:safe_update_begin}-\ref{lst:safe_update_end}).

\subsection{Composite Sampling}

To enhance safety and improve the student during training (\cref{alg:safetransfer}, lines \ref{lst:safe_env_step_begin}-\ref{lst:safe_env_step_end}), we leverage a \textit{composite sampling} strategy, which means our behaviour policy ($\pib$) is a mixture of the guide's policy ($\pi^\src$) and the student's policy ($\pi^\targ$).
So, at each environment step,  $a_t \sim \pib(\cdot \mid s_t), s_t \in \mathcal{S}^\targ$ where
\begin{equation}
\label{eq:behaviour_policy}
    \pib(\cdot \mid s_t)
    =
    \begin{cases}
    \pi^\src(\cdot \mid \Xi(s_t)), &\quad \text{if } b = \src, \\
    \pi^\targ(\cdot \mid s_t), &\quad \text{otherwise}.
    \end{cases}
\end{equation}
We investigate two strategies to define $b$.

\textbf{\textbf{Linear-decay (\cref{alg:composite_sampling-ld} in \cref{app:composite-sampling}).}}
{This strategy, denoted as $b = f_{\text{ld}}(\src,\targ)$,} linearly decreases the probability of using $\pi^\src$ with a constant decay rate after each iteration of the algorithm, conversely increasing the probability of using $\pi^\targ$. 
We have two modes with \textit{linear-decay}: 
    \emph{step-wise}, where in each time step we may change $\pib$; and 
    \emph{trajectory-wise}, where $\pib$ only changes at the start of a trajectory.
The mode is decided before executing an episode, and smoothly switches from the complete \emph{step-wise} to the complete \emph{trajectory-wise} over the training process. 

\textbf{\textbf{Control-switch (\cref{alg:composite_sampling-cs} in \cref{app:composite-sampling}).}}
To balance between the safe exploration and the sample efficiency (the samples from the target policy is relatively more valuable), the student policy keeps sampling, i.e.,  $\pib = \pi^\targ$ at the start of a trajectory; after we meet the first $c_{t-1}>0$, we have $\pib = \pi^\src$ until the end of the trajectory. Therefore, the guide policy serves as a \textit{rescue policy} to improve safety during sampling.
We denote this strategy as $b = f_{\text{cs}}(\src,\targ)$.

With the \textit{composite sampling} strategy, the function approximation may diverge, because $\pi^\targ$ and $\pib$ are too different, especially when we collect most data following $\pi^\src$.
This phenomenon is related to the \textit{deadly triad}~\cite{sutton2016emphatic}.
To eliminate its negative effect, we endow each sample with an \textit{importance sampling} (IS) ratio:
\begin{equation}
\label{eq:ISratio}
    \mathcal{I}(s, a) =
    \min\left(\max\left(\frac{\pi^\targ(a\mid s)}{\pib(a\mid s)}, \mathcal{I}_l\right),\mathcal{I}_u\right).
\end{equation}
The clipping hyper-parameters $\mathcal{I}_u$ and $\mathcal{I}_l$ are introduced to reduce the variance of the off-policy TD target.
Notice that if $\pib$ is using the student $\pi^\targ$ then $\mathcal{I}(s,a) = 1$.
Here, in addition to use the IS ratio $\mathcal{I}$ for learning values (the \textit{critics}),
we also use it in the policy update, as shown in line~\ref{lst:updateparameters} of \cref{alg:safetransfer}.

\section{Empirical Analysis}
We evaluate how well our method transfers from the reward-free setting using the SafetyGym engine~\cite{ray2019benchmarking}, where a random-initialized robot navigates in a 2D map to reach target positions while trying to avoid dangerous areas and obstacles (\cref{fig:envs}). These tasks are particularly complex due to the observation space; instead of observing its location, the agent observes the relative location of other objects with a lidar sensor.
We considered three environments with different complexity levels.
A \textbf{static} environment with a point robot and a hazard. The locations of the hazard and goal are fixed in all episodes.   
A \textbf{semi-dynamic} environment with a car robot, four hazards, and four vases. The locations of the hazards and vases are the same in all episodes.
The location of the goal is randomly-initialized in each episode.
A \textbf{dynamic} environment with a point robot, eight hazards, and a vase. The locations of the goal, vase, and hazards are randomly-initialized in each episode.

\begin{figure}[tbp]\hfill
    \includegraphics[width=0.28\columnwidth]{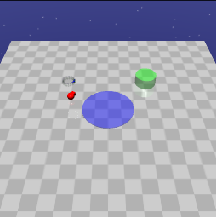}\hfill
    \includegraphics[width=0.28\columnwidth]{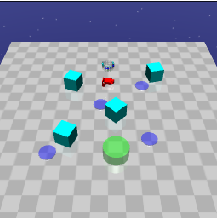}\hfill\includegraphics[width=0.28\columnwidth]{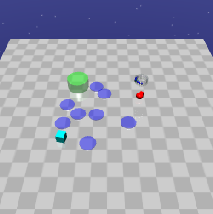}\hfill~
\caption{Navigation tasks with different complexity levels: \textbf{static} where all objects are fixed (left), \textbf{semi-dynamic} where the goal is randomly initialized before each episode (center), and \textbf{dynamic} where
all objects are randomly initialized locations before each episode (right).
}
\label{fig:envs}
\end{figure}

The \textit{guide} agent is trained without the goals, and its auxiliary reward is the magnitude of displacement at each time step.
We provide a detailed description of the safety-mapping function in \cref{app:hyper}.
Since our focus is on the target task and the guide is trained in a controlled environment, we do not consider the guide's training in the evaluation.
In the target tasks, we use the original reward signal from Safety Gym, i.e., the distance towards the goal plus a constant for finishing the task~\cite{ray2019benchmarking}.
In all environments: $c=1$, if an unsafe interaction happens, and $c=0$, otherwise.
We repeat each experiment 10 times with different random seeds and the plots show the mean and standard deviation of all runs.

To evaluate the performance during training, we use the following metrics:
    safety of the behaviour policy (\textrm{Cost-Return~$\pib$}),
    performance of the behaviour policy (\textrm{Return~$\pib$}),
    safety of the target policy (\textrm{Cost-Return~$\pi^\targ$}),
    and performance of the target policy (\textrm{Return~$\pi^\targ$}).
To check the convergence of the target policy, we have a test process with 100 episodes after each epoch (in parallel to the training) to evaluate \textrm{Return~$\pi^\targ$} and \textrm{Cost-Return~$\pi^\targ$}.
\cref{app:target-policy} reports the evaluation of $\pi^\targ$ and \cref{app:hyper} the hyperparameters used.
The supplemental material provides the code of the experiments.

\subsection{Ablation Study}
We investigate each component of the proposed \textsc{\ours{}} algorithm individually to answer the following questions:
    \textit{i}) Does the \textit{auxiliary reward} enlarge the exploration range?
    \textit{ii}) Does a better \textit{guide} agent result in a better student in the target task?
    \textit{iii}) How does the \textit{adaptive strength} of the KL regularization affect the performance?
    \textit{iv}) How does the \textit{composite sampling} benefit the safe transfer learning?

\textbf{\textit{i}) Auxiliary reward leads to more diverse trajectories.}
We performed an ablation of our approach where no auxiliary reward is added while training the \textit{guide} agent, called \textsc{MaxEnt}.
We refer to the agent with the auxiliary reward as \textsc{\ours{}}.
This teases apart the role the designed auxiliary task plays in the exploration.
In Figure~\ref{fig:exploration_analysis}, we can see that \textsc{\ours{}} can explore larger areas in \textit{Static} and \textit{Semi-Dynamic}, which have the same layout in each episode.
We notice that \textsc{MaxEnt} is safe, but the explored space is limited.
That is also the case in \textit{Dynamic}, as shown in the attached videos.

\textbf{\textit{ii}) An effective guide can speed up the student's training.}
We compare how these guides (\textsc{MaxEnt} and \textsc{\ours{}}) affect the learning in the target task. In Figure~\ref{fig:AblationStudy} (\cref{app:ablation-study}),
we notice that both methods can collect samples safely, but the agent using the auxiliary reward needs fewer interactions to find high-performing policies.

\textbf{\textit{iii}) Safety-adaptive regularization improves the student's convergence rate.}
To combine the original reward with the bonus to follow the guide ($\omega$), we have the following choices:
    fix the weights of the bonus and make it to be a hyperparameter to tune (\textsc{FixReg});
    apply a decay rate to linearly decrease the weights during training~(\textsc{DecReg}); and,
    adapt the weights of the bonus based on the safety performance~(\textsc{\ours{}}).
In \cref{at-a} (\cref{app:ablation-study}) we observe that this weight does not affect the safety of the agent, but both \textsc{FixReg} and \textsc{DecReg} cause the student to converge slower in terms of performance (\cref{at-b} in~\cref{app:ablation-study}).

\textbf{\textit{iv}) Composite sampling enhances safety and final performance.}
We modify the composite sampling approach, sampling only from the guide (\textsc{GuiSam}) or the student (\textsc{StuSam}) instead.
From the results in \cref{at-a} (\cref{app:ablation-study}), we can see that \textsc{GuiSam} can ensure safety, but the student does not learn a safe optimal policy (\cref{at-b} in~\cref{app:ablation-study}).
Compared to our method, \textsc{StuSam} performs similarly converging to a safe target policy, but fails to satisfy the constraint at the early stage of training. So, \textit{composite sampling} is necessary to avoid the dangerous actions from a naive policy and to ensure the target task is solved.

\begin{figure}[tbp]
\centering
\setlength\tabcolsep{2pt}\small
\begin{tabular}{ccccc}
\multicolumn{2}{c}{\textbf{Static}} &~& \multicolumn{2}{c}{\textbf{Semi-Dynamic}} \\
\includegraphics[height=0.23\columnwidth]{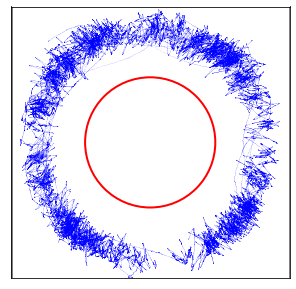}&
\includegraphics[height=0.23\columnwidth]{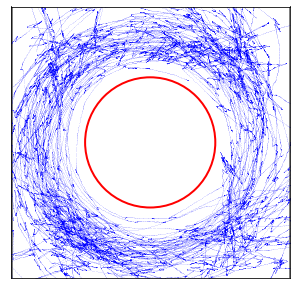}&~&
\includegraphics[height=0.23\columnwidth]{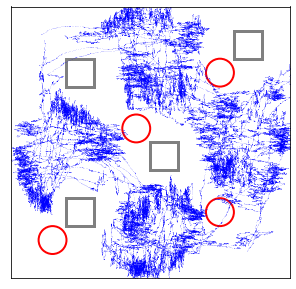}&
\includegraphics[height=0.23\columnwidth]{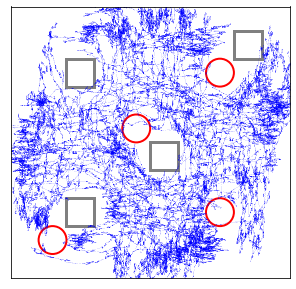}\\
\textsc{MaxEnt}  & \textsc{\ours{}} &~& \textsc{MaxEnt} & \textsc{\ours{}}
\end{tabular}
\caption{Exploration analysis with trajectories collected by the different guide agents in Static and Semi-Dynamic. 
}
\label{fig:exploration_analysis}
\end{figure}

\begin{figure*}[tb]
\subfigure[Static]{\label{DuringTraining-a}
\begin{minipage}[t]{0.33\linewidth}
\includegraphics[width=0.93\textwidth]{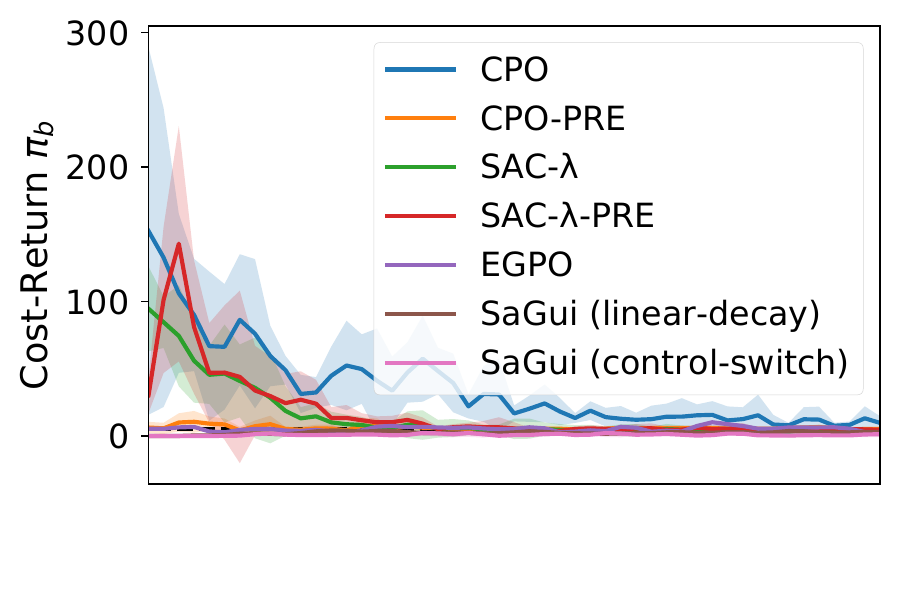}  \vspace{-21.5pt}\\
\includegraphics[width=0.93\textwidth]{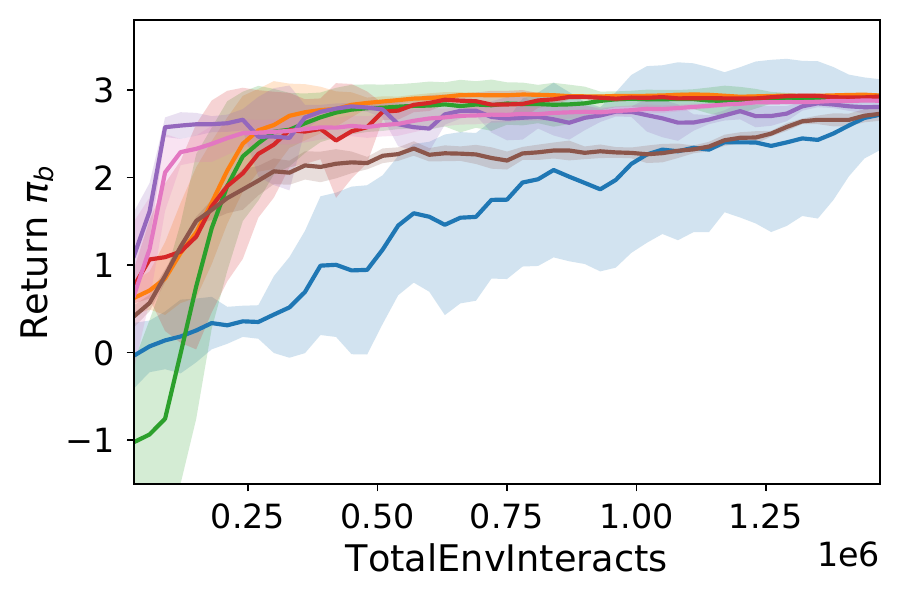}
\end{minipage}
}%
\subfigure[Semi-Dynamic]{\label{DuringTraining-b}
\begin{minipage}[t]{0.33\linewidth}
\includegraphics[width=0.93\textwidth]{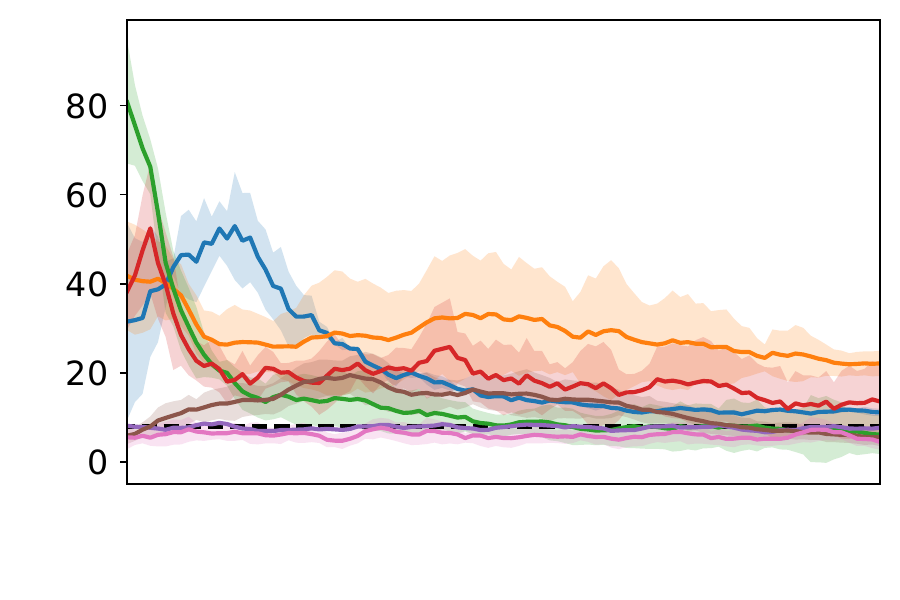} \vspace{-21.5pt}\\ 
\includegraphics[width=0.93\textwidth]{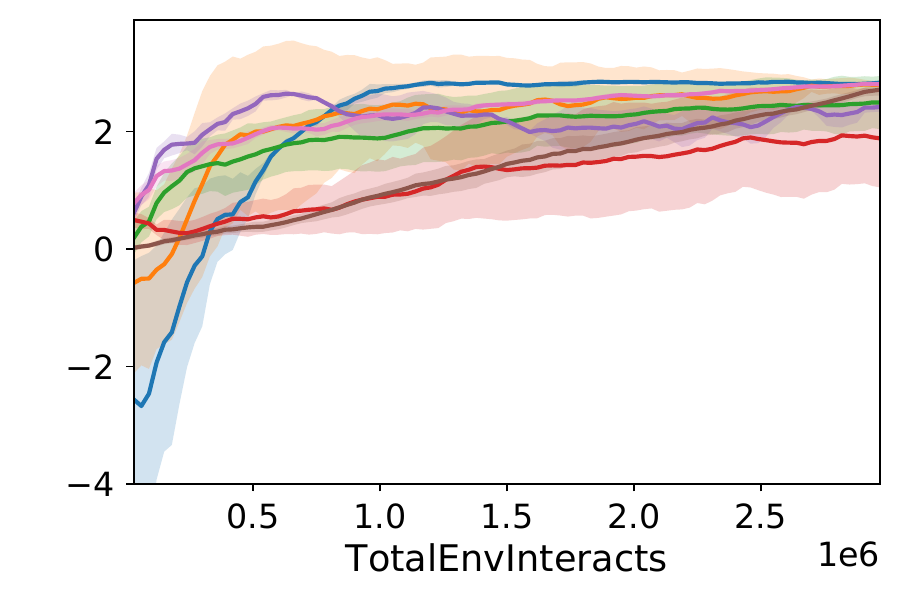}
\end{minipage}
}%
\subfigure[Dynamic]{\label{DuringTraining-c}
\begin{minipage}[t]{0.33\linewidth}
\includegraphics[width=0.93\textwidth]{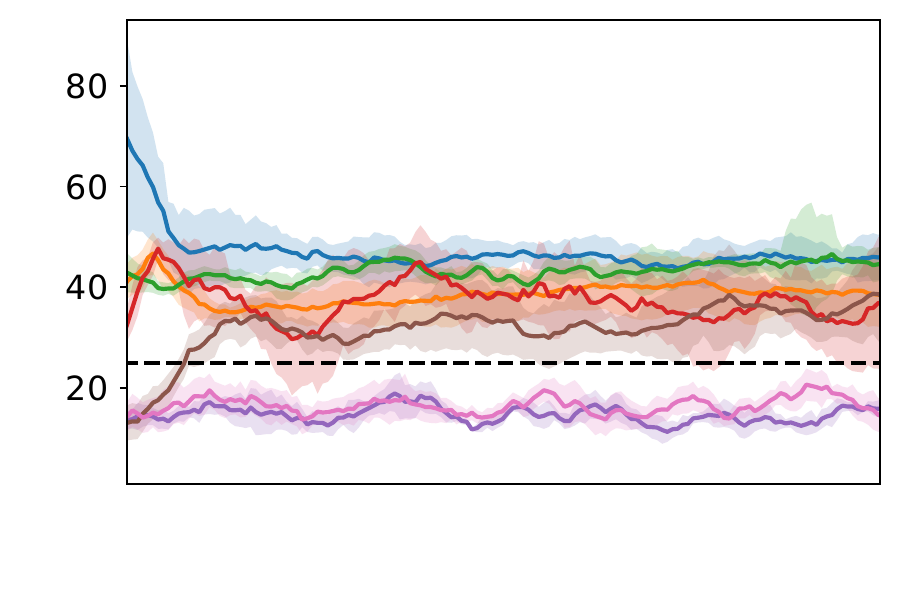}  \vspace{-21.5pt}\\
\includegraphics[width=0.93\textwidth]{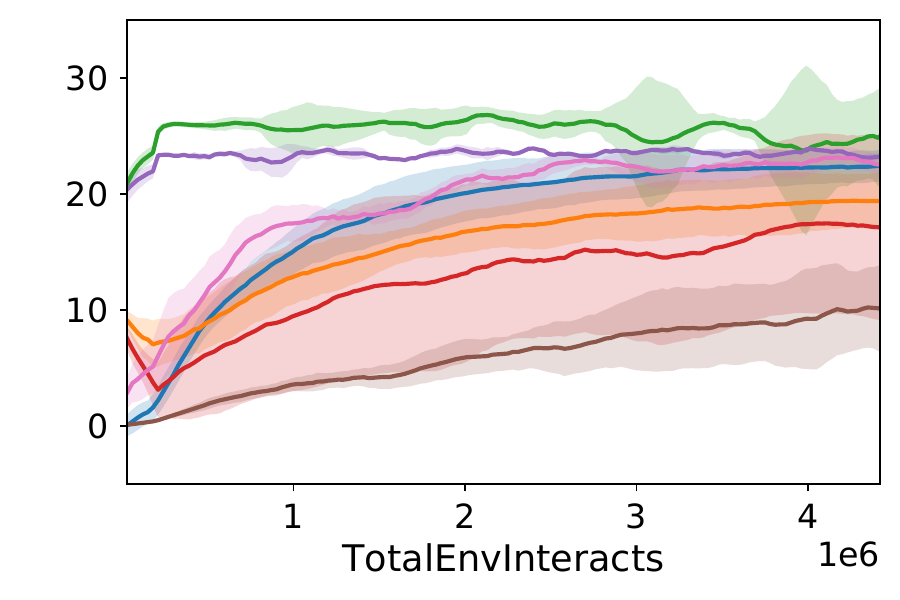}
\end{minipage}
}%
\caption{Evaluation of $\pib$ for CPO, \textsc{CPO-pre}, SAC-$\lambda$, \textsc{SAC-$\lambda$-pre}, EGPO, and \textsc{\ours{}} over 10 seeds. The solid lines are the average of all runs, and the shaded area is the standard deviation. The black dashed lines indicate the safety thresholds.
}
\label{DuringTraining}
\end{figure*}

\subsection{Comparison with Baselines}
Finally, we compare {\footnotesize\color{tab:brown}\faSquare}~\textsc{\ours{}} (control-switch) and {\color{tab:pink}\faSquare}~\textsc{\ours{}} (linear-decay) with five baselines, divided into three groups.
\begin{description}
    \item[Learning from scratch.]
        \textbf{(1)} {\color{tab:green}\faSquare}~\textsc{SAC-$\lambda$} \cite{ha2020learning} shows the performance when starting to learn from scratch, representing an off-policy algorithm.
        Similarly, \textbf{(2)} {\color{tab:blue}\faSquare}~\textsc{CPO} \cite{achiam2017cpo} is an on-policy algorithm that maximizes the reward in a small neighbourhood to enforce the safety constraints.
    \item[Pre-training.]
        \textbf{(3)} {\color{tab:orange}\faSquare}~\textsc{CPO-pre} and \textbf{(4)}  {\color{tab:red}\faSquare}~\textsc{SAC-$\lambda$-pre} demonstrate how CPO and SAC-$\lambda$ perform after being pre-trained in a task that replaces the target reward by the auxiliary reward.
        So, we also encourage exploration in the task for pre-training, which shares the same observation space with the target task. 
    \item[Expert-in-the-loop.]
        \textbf{(5)} As an upper bound, we also consider the Expert Guided Policy Optimization ({\color{tab:purple}\faSquare}~EGPO)~\cite{peng2022safe} algorithm, which uses knowledge from the target task in the form of an expert to train a student policy.
        EGPO proposes a guardian mechanism that replaces the actions of the student by the expert when the student takes actions too different from the expert.
        In summary, EGPO uses an expert policy as a demonstrator as well as a safety guardian (see \cref{app:egpo} for more details).
\end{description}

Notice, for \textsc{CPO-pre}, \textsc{SAC-$\lambda$-pre} and \textsc{EGPO} we adapt the source task to have the same observation space as the target task, which gives them an advantage compared to \textsc{\ours{}}.
Furthermore, \textsc{EGPO} has access to a policy trained on the target task, while \textsc{\ours{}} only has access to the source task without the goal observations.

\textbf{Safety during training.}
In \cref{DuringTraining}, we observe that \textsc{\ours{}} (control-switch) and \textsc{EGPO} are the only methods that exhibit safe behaviour during the full training process.

\textbf{Learning from scratch is unsafe and may converge to sub-optimal and even unsafe policies.}
\textsc{SAC-$\lambda$} and \textsc{CPO} can learn safe policies in relatively simpler environments (\textit{Static} and \textit{Semi-Dynamic}) but they violate the safety constraints at the beginning of training, which is expected. In \textit{Dynamic}, \textsc{SAC-$\lambda$} and \textsc{CPO} fail to attain safe performance. However, with benefits from the \textit{guide}, \textsc{\ours{}} (control-switch), on the basis of \textsc{SAC-$\lambda$}, attains a better balance between safety and performance.

\textbf{Pre-training is insufficient.}
With pre-training, a safe initialization cannot benefit \textsc{CPO-pre} and \textsc{SAC-$\lambda$-pre} in safety, and may have negative effects. We infer that it is difficult to generalize a task when faced with a new reward signal~\cite{Igl2021}. Especially for \textsc{SAC-$\lambda$-pre} with an initialized~$Q^r$, the difficulty to adapt is evident. 

\textbf{Fast convergence rates.}
Benefiting from the targeted expert policy, the behaviour policy of \textsc{EGPO} has a high return throughout the training in the target environment.
But \textsc{\ours{}} (control-switch) quickly finds policies with similar performance despite lack of knowledge of the target task (\cref{DuringTraining}).

\textbf{The distillation mechanism ensures the safety of the target policy.}
\cref{DuringTrainingTarget} (\cref{app:target-policy}) shows that \textsc{\ours{}} (control-switch) can learn a well-performing target policy in a safe way.
Without the policy distillation mechanism like \textsc{\ours{}}, \textsc{EGPO} (learning only from the expert demonstrations) fails to find a safe target policy.
This indicates that the target policy computed with \textsc{\ours{}} may eventually take full control of the target task, while the policy computed by \textsc{EGPO} may still require interventions from the expert.

\textbf{Control-switch can be more effective than linear-decay.}
\textsc{\ours{}} (linear-decay), which lacks samples from $\pi^\targ$ at the early stage of training, does not achieve similar performance as \textsc{\ours{}} (control-switch). \cref{DuringTraining-b,DuringTraining-c} show that \textit{linear-decay} fails to compose the behaviour policy $\pib$ safely.

\textbf{Summary.}
Overall, \textsc{\ours{}} does not violate the safety constraints on the target environment, quickly finds high-performing policies, and can train a student able to act independently from the guide.

\section{Conclusion}
This work handles multiple challenges of reinforcement learning with safety constraints.
It shows how we can use a safe exploration policy (the guide) during data collection and gradually switch to a policy that is dedicated to the target task (the student).
It tackles the off-policy issue that arises from collecting data with a policy different from the target policy.
It shows how the student can make the best use of the guide's policy using an incentive to imitate the guide, which makes the student learn faster how to behave safely.
It demonstrates that simply initializing an agent with a safe policy may not be as effective as learning a new policy dedicated to the target task through policy distillation.
Finally, it proposes a method that can collect diverse trajectories, which reduces the sample complexity of the student on the target task.
In summary, the framework proposed is a safe and sample-efficient way of training the agent on a target task.

\textbf{Limitations.}
Our framework assumes that the source task provides information on the cost function, allowing the guide policy to accumulate the same cost in the target task as in the source task (\cref{method-overview}). This assumption enables safe learning in the target task. However, if the cost function or trajectory distribution changes, the source task may not provide useful safety information for the target task. In such cases, alternative methods should be considered to ensure safe exploration. We focus on downstream tasks where pre-trained agents are utilized for safe exploration knowledge, disregarding the interactions used to train the SaGui policy. Sample efficiency in the target task is emphasized, not including samples used for source task learning. Nevertheless, the pre-trained policy can be reused for multiple target tasks, enabling us to amortize the guide's training across them, making the number of samples required to train the guide negligible as the number of downstream tasks increases. While efficient learning of a SaGui policy is a significant challenge, we view it as a separate research direction~\cite{hazan2019provably}.

\textbf{Future work.}
While we consider a relatively simple strategy to achieve rich exploration, our framework allows the translation of any progress in reward-free RL into training the \textit{guide} agent.
For instance, we could adopt works with the entropy of the state density~\cite{hazan2019provably,lee2019efficient,seo2021state,islam2019marginalized,Zhang2020,svidchenko2021maximum,vezzani2019learning,qin2021density,Yang2023cem}, or with the adaptive reward functions to explore various skills~\cite{eysenbachdiversity}.
Another option to improve exploration is to find a set of diverse policies to the same problem~\cite{Ghasemi2021,Kumar2020,zahavy2021discovering}. 
Our framework could easily combine multiple guides.
As to composite sampling strategies, recovery and shielding mechanisms \cite{Alshiekh2018,thananjeyan2021recovery} could be further explored to combine with a safe guide, in particular using the control-switch mechanism that we evaluated. 
Nevertheless, we highlight that while a student using a recovery policy must explore alone, the safe guide can enhance the  student's exploration, accelerating the learning of the target task.

\ack
We thank the reviewers for their insightful comments.
This work has been partially funded by the ERC Starting Grant 101077178 (DEUCE) and the NWO grant NWA.1160.18.238 (PrimaVera). 
Qisong Yang is supported by Xidian University.

\bibliography{main}

\newpage
\onecolumn
\appendix

\newpage
\appendix

\section{SAC-Lagrangian}
\label{app:saclag}
In this section, we present how we learn the parameters in SAC-$\lambda$. In SAC-$\lambda$, the constrained optimization problem is solved by Lagrangian methods \cite{bertsekas2014constrained}, where an entropy weight $\alpha$ and a safety weight $\beta$ (Lagrange-multipliers) are introduced to the constrained optimization:
\begin{equation}
\max_\pi \min_{\alpha \geq 0} \min_{\beta \geq 0}  f(\pi) - \alpha e(\pi) - \beta g(\pi),
\label{eq:lagopt}
\end{equation}
where
$f(\pi) = \E_{s_0 \sim \iota(\cdot), a_0 \sim \pi(\cdot\mid s_0)}\left[Q_{\pi}^{r}(s_0,a_0)\right]$,
$e(\pi) = \E_{s_t \sim \rho_{\pi}} \left[ \log(\pi(\cdot\mid s_t))  + \minH \right]$ , \text{and \hfill}
$g(\pi) =  \E_{s_0 \sim \iota(\cdot), a_0 \sim \pi(\cdot\mid s_0)}\left[Q_{\pi}^{c}(s_0,a_0) - d\right]$.
In \eqref{eq:lagopt}, the max-min optimization problem can be solved by gradient ascent on $\pi$, and descent on $\alpha$ and~$\beta$.

Initially, SAC-$\lambda$ was developed for local constraints, which means that the safety cost is constrained at each timestep \cite{ha2020learning}.
However, it can be easily generalized to constrain the expected cost-return\footnote{A similar approach can be found at \url{https://github.com/openai/safety-starter-agents}.}. 

Using a similar formulation \cite{haarnoja2018soft2}, we can get the actor loss:
\begin{equation}
    J_\pi(\theta_\pi) =
    - \E_{\begin{subarray}{c} s_t \sim \mathcal{D} \\ a_t \sim \pi(\cdot \mid s_t)\end{subarray}} 
    \left[ 
                Q_{\pi}^r(s_t,a_t) 
                -\alpha\log \pi(a_t \mid s_t) 
                -\beta Q_{\pi}^c(s_t,a_t)
    \right],
\label{eq:UpdatePi}
\end{equation}
where $\mathcal{D}$ is the replay buffer and $\theta_\pi$ indicates the parameters of the policy~$\pi$.

The safety and reward critics (including a bonus for the policy entropy) are, respectively, trained to minimize
\begin{equation}
    J_C(\theta_C) \!=\!\!
    \E_{(s_t,a_t) \sim \mathcal{D}}
    \left[
        \frac{1}{2}
        \left(
                Q^c_{\theta_C}(s_t,a_t) -
                 (c_t \!+\!\gamma Q^c_{\theta_C}(s_{t\!+\!1},a_{t\!+\!1}))
        \right)^2    
    \right]
\label{eq:lossjc}
\end{equation}
and
\begin{equation}
\begin{aligned}
    J_R(\theta_R) = \E_{(s_t,a_t) \sim \mathcal{D}} \Big[\frac{1}{2} (Q^r_{\theta_R}(s_t,a_t)-(r_t  +\gamma (Q^r_{\theta_R}(s_{t+1},a_{t+1})-\alpha \log (\pi(a_{t+1} \mid s_{t+1})))))^2    \Big],
\end{aligned}
\label{eq:Updateqr}
\end{equation}
where $a_{t+1} \sim \pi(\cdot \mid s_{t+1})$, $Q^c$ and $Q^r$ are parameterized by $\theta_C$ and $\theta_R$, respectively.

\begin{algorithm}[h]
\caption{Maximum exploration RL for \textit{safe guide}}
\label{alg:safe-explorer}
\textbf{Input}: $\mathcal{M}^\src$, $\alpha$, $d$\\
\textbf{Initialize}: $\mathcal{D} \leftarrow \emptyset$, $\theta^\src_\chi \text{ for } \chi \in \{\pi,R,C,\beta\}$\\
\textbf{Output}: Optimized parameters $\theta^\src_\pi$ for $\pi^\src$

\begin{algorithmic}[1] %
\FOR{each iteration} 
    \FOR{each environment step}\label{lst:env_step_begin}
        \STATE $a_t \sim \pi^\src(\cdot \mid s_t)$
        \STATE $s_{t+1} \sim \mathcal{P}(\cdot \mid s_t,a_t)$
        \STATE $r^\delta_t \leftarrow \delta(f^\ddagger(s_t),f^\ddagger(s_{t+1}))$ \COMMENT{Auxiliary task \eqref{eq:DefAuxRew}}
        \STATE $c^\src_t \leftarrow c^\src(s_t,a_t)$
        \STATE $\mathcal{D}\leftarrow  \mathcal{D} \cup \{(s_t,a_t,r^\delta_t,c^\src_t,s_{t+1}) \}$
        \COMMENT{Replay buffer}
        \ENDFOR\label{lst:env_step_end}
        \FOR{each gradient step}\label{lst:update_begin}
        \STATE Sample experience from replay buffer $\mathcal{D}$
        \FOR{$\chi \in \{\pi,R,C,\beta\}$}
        \STATE $\theta^\src_\chi \leftarrow \theta^\src_\chi - \eta_\chi \hat{\nabla}_{\theta^\src_\chi} J_{\chi}(\theta^\src_\chi) $ \COMMENT{Parameter updating}
        \ENDFOR
    \ENDFOR{\label{lst:update_end}}
\ENDFOR \label{lastline}

\end{algorithmic}
\end{algorithm}

Finally, 
let $\theta_\alpha$ and $\theta_\beta$ be the parameters learned for the exploration and safety weight such that $\alpha = {\rm softplus}(\theta_\alpha)$ and $\beta = {\rm softplus}(\theta_\beta)$, where
\[
\softplus(x) = \log(\exp(x) + 1).
\]
We can learn $\alpha$ and $\beta$ by minimizing the loss functions:
\begin{align}
J_\alpha(\theta_\alpha) = \E_{\begin{subarray}{c} s_t \sim \mathcal{D}\\ a_t \sim \pi(\cdot \mid s_t)\end{subarray}} \left[ -\alpha(\log(\pi(a_t \mid s_t))+ \minH ) \right],
\end{align}
\text{and}
\begin{align}
J_\beta(\theta_\beta) = \E_{\begin{subarray}{c} s_t \sim \mathcal{D}\\ a_t \sim \pi(\cdot \mid s_t)\end{subarray}} \left[ \beta(d-Q_{\pi}^c(s_t,a_t))\right].
\end{align}
So the corresponding weight will be adjusted if the constraints are violated, that is, if we estimate that the current policy is unsafe or if it does not have enough entropy.

In this paper, we train the \textit{guide} agent by solving the constraint optimization problem \eqref{eq:SafeMERL} based on the auxiliary reward $r^\delta$, defined by \eqref{eq:DefAuxRew}.
Then, we can use SAC-$\lambda$ directly employed to solve \eqref{eq:SafeMERL}, as \cref{alg:safe-explorer} shows.

\section{Relation between source and target tasks}

In this section, we describe the source task given a target task and the mapping from the target task to the source task.

\subsection{State Abstraction}
\label{app:state_abstraction}

To build the source task based on a target task and a mapping $\Xi$ from the target state space to the source state space, we assume $\Xi$ is a state abstraction function \cite{Li2006}.

Let $\mathcal{M}^\targ = \langle \mathcal{S}^\targ, \mathcal{A}^\targ, \mathcal{P}^\targ, r^\targ, c^\targ, d^\targ, \iota^\targ, \gamma \rangle$ be the target task, 
$\mathcal{M}^\src = \langle \mathcal{S}^\src, \mathcal{A}^\src, \mathcal{P}^\src, \emptyset, c^\src, d^\src, \iota^\src, \gamma \rangle$ be the source task,  and
$\Xi: \mathcal{S}^\targ \rightarrow \mathcal{S}^\src$ the state abstraction function.
We define $\Xi^{-1}$ as the inverse of the abstraction function such that  $\Xi^{-1}(s^\targ) = \{s^\src \in \mathcal{S}^\src | \Xi(s^\src) = s^\targ \}$.
We assume a weighting function $w\colon \mathcal{S} \mapsto [0,1]$,
where 
\begin{equation}\label{eq:w}
\sum_{s^\targ \in \Xi^{-1}(s^\src)} w(s^\targ) = 1, \forall s^\src \in \mathcal{S}^\src.    
\end{equation}
Now we can define the transition and cost function of the target task:
\begin{align}
\mathcal{P}^{\src}(s^{\src'} \mid s^\src, a)
    ~&= \sum_{s^\targ \in \Xi^{-1}(s^\src)} \quad \sum_{s^{\targ'} \in \Xi^{-1}(s^{\src'})} w(s^\targ) \mathcal{P}^{\targ}(s^{\targ'} \mid s^\targ, a)\\
c^\src(s^\src, a)
    ~&= \sum_{s^\targ \in \Xi^{-1}(s^\src) } w(s^\targ) c^\targ(s^\targ, a)\\
\iota^\src(s^\src)
    ~&= \sum_{s^\targ \in \Xi^{-1}(s^\src)} w(s^\targ) \iota^\targ(s^\targ).
\end{align}

\subsection{Proof of Lemma 1}
\label{sec:proof_lemma}

In this section, we show that if $\Xi$ is a $Q_{\pi}^{c}$-irrelevance state abstraction, then the expected cost of any source policy is the same in the source task and in the target task.
For the convenience of the reader, we restate our assumption and lemma.

\paragraph{Assumption 3.}
\textit{
$\Xi$ is a $Q_{\pi}^{c}$-irrelevance abstraction~\cite{Li2006}, therefore
\[
\Xi(s) = \Xi(s')
        \Rightarrow Q_{\pi^\targ}^{c}(s, a) = Q_{\pi^\targ}^{c}(s', a), 
    \forall s, s' \in \mathcal{S}^\targ, a \in \mathcal{A}, \pi^\targ.
\]
}

\paragraph{Lemma 1.}\textit{
Given \cref{a:shared_action} and \cref{a:abstraction}, we have
\[
    Q_{\pi^\src}^{c,\src}(\Xi(s),a) 
    =
    Q_{\pi^{\src \rightarrow \targ}}^{c,\targ}(s,a)
    \quad \forall s \in \mathcal{S}^\targ, a \in \mathcal{A}, \pi^\src.
\]
That is, the expected cost of a source policy is the same in the source task and in the target task.
}
Our proof follows an induction strategy inspired by previous work \cite[Claim 1]{DBLP:conf/icml/AbelHL16}.
\begin{proof}
Let us consider a non-Markovian constrained decision process $\mathcal{M}_T = \langle \mathcal{S}_T, \mathcal{A}, \mathcal{P}_T, \emptyset, c^T, d^\src, \iota_T, \gamma \rangle$ which is parameterized by an integer $T$.
In this process, the agent takes $T$ steps on the source task and then switches to the target task.
Thus,
\begin{align}
\mathcal{S}_T ~&=
\begin{cases}
    \mathcal{S}^\targ & \text{ if } T = 0 \\
    \mathcal{S}^\src & \text{ otherwise.}
\end{cases} \\ 
c_T(s, a) ~&= 
\begin{cases}
    c^\targ(s, a) & \text{ if } T = 0 \\
    c^\src(s, a) & \text{ otherwise.}
\end{cases} \\     
\mathcal{P}_T(s' \mid s, a)~&= 
\begin{cases}
    \mathcal{P}^{\targ}(s' \mid s, a) & \text{ if } T = 0 \\
    \sum_{s^\targ \in \Xi^{-1}(s)} w(s^\targ) \mathcal{P}^{\targ}(s' \mid s^\targ, a) & \text{ if } T = 1 \\
    \mathcal{P}^{\src}(s' \mid s, a) & \text{ otherwise. }
\end{cases} \\
\iota_T(s) ~&= 
\begin{cases}
    \iota^\targ(s) & \text{ if } T = 0 \\
    \iota^\src(s) & \text{ otherwise.}
\end{cases} 
\end{align}
The $Q_{\pi}^{c,\targ}(s,a)$-value for taking action $a \in \mathcal{A}$ in state $s \in \mathcal{S}_T$ and follow the policy $\pi$ is:
\begin{align}
Q_{T,\pi}^{c}(s,a) = 
\begin{cases}
    Q_{\pi}^{c,\targ}(s, a) & \text{ if } T = 0 \\
    \sum_{s^\targ \in \Xi^{-1}(s) } w(s^\targ) Q_{\pi}^{c,\targ}(s^\targ,a) & \text{ if } T = 1 \\
    c^\src(s, a) + \gamma \sum_{s' \in \mathcal{S}^\src} \mathcal{P}^\src(s' \mid s, a)  \sum_{a' \in \mathcal{A}}  \pi(a' \mid s') Q_{T-1,\pi}^{c}(s',a')  & \text{ otherwise.}
\end{cases}
\end{align}
We proceed by induction on $T$ to show that 
\[
\forall T, s^\targ, a, \pi: Q_{\pi}^{c,\targ}(s_T, a) = Q_{\pi}^{c,\targ}(s^\targ,a),
\]
where $s_T = s^\targ$ if $T=0$ and
$s_T = \Xi(s^\targ)$ otherwise.

\paragraph{Base case: $T=0$.}
As $Q_{0}^{c} =  Q^{c,\targ}$ this case follows trivially.

\paragraph{Base case: $T=1$.}
From the definition of $Q_{1,\pi}^{c}$, we have:
\begin{align}
    Q_{1,\pi}^{c}(s_T,a)
        = \sum_{s^{\targ'} \in \Xi^{-1}(s_T) } w(s^{\targ'}) Q_{\pi}^{c,\targ}(s^{\targ'},a) \\
        &= \sum_{s^{\targ'} \in \Xi^{-1}(s_T) } w(s^{\targ'}) Q_{\pi}^{c,\targ}(s^\targ,a) \label{step:s_star}\\
        &= Q_{\pi}^{c,\targ}(s^\targ,a) \sum_{s^{\targ'} \in \Xi^{-1}(s) } w(s^{\targ'}) \label{step:common_term}\\
        &= Q_{\pi}^{c,\targ}(s^\targ,a). \label{step:w}
\end{align}
In \cref{step:s_star}, we replace every $s^{\targ'}$ by the state $s^\targ$ applying Assumption 3.
As $s^\targ$ is independent of $s^{\targ'}$, in \cref{step:common_term}, we can move the Q-values out of the summation.
Finally, in \cref{step:w}, we can use \cref{eq:w} to replace the summation by 1, which concludes this case.

\paragraph{Incuctive case: $T> 1$.}
We assume as our inductive hypothesis that:
\[
\forall s^\targ, a, \pi: Q_{T-1, \pi}^{c}(s_{T}, a) = Q_{\pi}^{c,\targ}(s^\targ,a).
\]

We start applying the definition of $Q_T$ for $T > 1$:
\begin{align}
Q_{T, \pi}^{c}&(s_{T}, a) =
c^\src(s_{T}, a)
 + \gamma \sum_{s' \in \mathcal{S}^\src} \mathcal{P}^\src(s' \mid s_{T}, a) 
 \sum_{a' \in \mathcal{A}}  \pi(a' \mid s') Q_{T-1,\pi}^{c}(s',a')  \\
&=
\sum_{s^\targ \in \Xi^{-1}(s_{T}) } w(s^\targ)  c^\targ(s^\targ, a)
 + \gamma  
\sum_{s' \in \mathcal{S}^\src} \sum_{s^\targ \in \Xi^{-1}(s_{T})}  \sum_{s^{\targ'} \in \Xi^{-1}(s')} w(s^\targ) \mathcal{P}^{\targ}(s^{\targ'} \mid s^\targ, a)
 \sum_{a' \in \mathcal{A}}  \pi(a' \mid s') Q_{T-1,\pi}^{c}(s',a')  \label{step:definitions}\\
&=
\sum_{s^\targ \in \Xi^{-1}(s_{T}) } w(s^\targ)  c^\targ(s^\targ, a)
 + 
\sum_{s^\targ \in \Xi^{-1}(s_{T})} w(s^\targ) \gamma  \sum_{s' \in \mathcal{S}^\src} \sum_{s^{\targ'} \in \Xi^{-1}(s')}  \mathcal{P}^{\targ}(s^{\targ'} \mid s^\targ, a)
 \sum_{a' \in \mathcal{A}}  \pi(a' \mid s') Q_{T-1,\pi}^{c}(s',a')  \label{step:rearange1}\\
&=
\sum_{s^\targ \in \Xi^{-1}(s_{T}) } w(s^\targ) \left[ c^\targ(s^\targ, a)
 +  \gamma  \sum_{s' \in \mathcal{S}^\src} \sum_{s^{\targ'} \in \Xi^{-1}(s')}  \mathcal{P}^{\targ}(s^{\targ'} \mid s^\targ, a)
 \sum_{a' \in \mathcal{A}}  \pi(a' \mid s') Q_{T-1,\pi}^{c}(s',a')  \right] \label{step:rearange2}\\
&=
\sum_{s^\targ \in \Xi^{-1}(s_{T}) } w(s^\targ) \left[ c^\targ(s^\targ, a)
 +  \gamma  \sum_{s' \in \mathcal{S}^\src} \sum_{s^{\targ'} \in \Xi^{-1}(s')}  \mathcal{P}^{\targ}(s^{\targ'} \mid s^\targ, a)
 \sum_{a' \in \mathcal{A}}  \pi(a' \mid s') Q_{\pi}^{c,\targ}(s^{\targ'},a')
 \right]  \label{step:hyp}\\
&=
\sum_{s^\targ \in \Xi^{-1}(s_{T}) } w(s^\targ) \left[ c^\targ(s^\targ, a)
 +  \gamma  \sum_{s^{\targ'} \in \mathcal{S}^\targ}  \mathcal{P}^{\targ}(s^{\targ'} \mid s^\targ, a)
 \sum_{a' \in \mathcal{A}}  \pi(a' \mid s')  Q_{\pi}^{c,\targ}(s^{\targ'},a')
 \right]  \label{step:join_sums}\\
&=
\sum_{s^\targ \in \Xi^{-1}(s_{T}) } w(s^\targ) Q_{\pi}^{c,\targ}(s^\targ,a) \label{step:q_value_definition}\\
& = Q_{\pi}^{c,\targ}(s^\targ,a). \label{step:arbitrary_state}
\end{align}
In this derivation, 
\cref{step:definitions} applies the definitions of $c^\src$ and $\mathcal{P}^\src$,
\cref{step:rearange1,step:rearange2} rearrange our terms,
\cref{step:hyp} applies our inductive hypothesis, \cref{step:join_sums} join the two summations as we are considering all possible states in $\mathcal{S}^\targ$, and \cref{step:q_value_definition} we apply the Q-value definition.
Finally, in \cref{step:arbitrary_state} we can choose any arbitrary state $s^\targ \in \Xi^{-1}(s_{T})$, which concludes our proof.
\end{proof}

\section{Regularized Reward}
\label{app:regularized_reward}

\begin{equation}
\begin{aligned}
    \omega r^{\textrm{KL}} {+} \alpha r^\mathcal{H}
= &\omega \log \frac{\pi^\src(a_t\mid \Xi(s_t))}{\pi^\targ(a_t\mid s_t)} + \omega r^\mathcal{H}\\
= &\omega (\log(\pi^\src(a| \Xi(s))) - \log( \pi^\targ(a| s))) + \alpha r^\mathcal{H}\\
= &\omega \log(\pi^\src(a| \Xi(s))) + \omega (-\log(\pi^\targ(a| s))) + \alpha r^\mathcal{H}\\
= &\omega \log(\pi^\src(a| \Xi(s))) + \omega r^\mathcal{H} + \alpha r^\mathcal{H}\\
= &\omega r^\src + (\omega + \alpha) r^\mathcal{H}.
\end{aligned}
\end{equation}

\section{Two strategies in composite sampling}
\label{app:composite-sampling}

\paragraph{\textbf{Linear-decay (\cref{alg:composite_sampling-ld}).}}
This strategy linearly decreases the probability of using $\pi^\src$ with a constant decay rate after each iteration of the algorithm, conversely increasing the probability of using $\pi^\targ$. 
We have two modes with \textit{linear-decay}: 
    \emph{step-wise}, where in each time step we may change $\pib$; and 
    \emph{trajectory-wise}, where $\pib$ only changes at the start of a trajectory.
The mode is decided before executing an episode and smoothly switches from the complete \emph{step-wise} to the complete \emph{trajectory-wise} over the training process. 
We linearly decrease the probability of executing the \emph{step-wise} and use the \textit{guide} with a constant decay rate after each iteration of the algorithm, conversely increasing the probability of executing the \emph{trajectory-wise} and using the student policy. So, we initialize the probabilities $P_{\pi} = 1$ to determine $\pib$, and $P_{wise} = 1$ to determine the mode at the beginning (line~\ref{line:initialize-p}). We linearly decrease them with a constant decay rate $\upsilon$ (lines~\ref{line:decrease-pwise} and \ref{line:decrease-ppi}), determined by the training length. At the beginning of each episode, we sample $\kappa_{wise} \sim U(0,1)$, so if $\kappa_{wise} < P_{wise}$, we will execute \emph{step-wise}, or we are in \emph{trajectory-wise} (lines~\ref{line:determine-wise-begin}-\ref{line:determine-wise-end}).
Under \emph{step-wise}, at each time step, we sample from the \textit{guide} $\pi^\src$ with probability $P_{\pi}$, and sample from the student $\pi^\targ$ with probability $1-P_{\pi}$ (lines~\ref{line:decide-pib-stepwise-begin}-\ref{line:decide-pib-stepwise-end}). Under \emph{trajectory-wise}, we only make a decision once at the beginning of the trajectory (line~\ref{line:decide-pib-trajwise}).

\paragraph{\textbf{Control-switch (\cref{alg:composite_sampling-cs}).}}
To balance between the safe exploration and the sample efficiency (the samples from the target policy are relatively more valuable), the student policy keeps sampling, i.e.,  $\pib = \pi^\targ$ at the start of a trajectory (line~\ref{line:StuSam-cs}); after we meet the first $c_{t-1}>0$, we have $\pib = \pi^\src$ until the end of the trajectory (lines~\ref{line:switch_control_begin}-\ref{line:switch_control_end}). Therefore, the guide policy serves as a \textit{rescue policy} to improve safety during sampling. In addition, we leverage two replay buffers $\mathcal{D}^\src$ and $\mathcal{D}^{\targ}$ to save the guide and student samples separately (lines~\ref{line:save_experience_begin}-\ref{line:save_experience_end}), so as to control the probability $P_{\mathcal{D}^\targ}$ to use the more on-policy samples in $\mathcal{D}^{\targ}$. Thus, we have the probability $P_{\mathcal{D}^\src} = 1-P_{\mathcal{D}^\targ}$ to sample from $\mathcal{D}^\src$.
In practice, we train the safe guide to achieve $Q_{\pi^\src}^c(s,a) \leq d, s \sim \mathcal{D}, a \sim \pi^\src(\cdot \mid s)$. From the definition of $Q_{\pi^\src}^c(s,a)$, we can basically ensure $\E_{\tau \sim \rho_{\pi^\src}} \left[\sum^{\infty}_{t=0} \gamma^t  c_t \middle| s_0 = s, a_0 = a\right] \leq d$ even starting with $c_0 >0$.

\paragraph{\textbf{Main difference}.} The key distinction between linear-decay and control-switch approaches lies in the number of off-policy interactions from the student's perspective. Linear-decay entails the collection of more samples from the guide during early episodes, whereas control-switch enables the agent to collect more on-policy samples and only occasionally relies on off-policy samples from the guide following unsafe interactions. Additionally, linear-decay necessitates predefined schedules for the behaviour policy, while control-switch is adaptive. The pursuit of novel adaptive schedules presents a promising avenue for future research.

\begin{algorithm}[tbp]
\caption{Composite sampling (linear-decay)}
\label{alg:composite_sampling-ld}
\textbf{Input}: $\pi^\src$, $\pi^\targ$, $\upsilon$\\
\textbf{Initialize}: $P_{\pi}\leftarrow 1$, $P_{\textrm{wise}}\leftarrow 1$ \label{line:initialize-p}\\
\textbf{Output}: $\pib$
\begin{algorithmic}[1] %
\FOR{each iteration}
\STATE $P_b(\src) = P_{\pi} $ \COMMENT{The probability of using $\pi^\src$}
\STATE $P_b(\targ)= 1-P_{\pi}$ \COMMENT{The probability of using $\pi^\targ$}
\STATE Sample $\kappa_{\textrm{wise}} \sim U(0,1)$ \label{line:determine-wise-begin}
\IF{$\kappa_{\textrm{wise}}<P_{\textrm{wise}}$}
\STATE \emph{step-wise} $\leftarrow \mathit{true}$
\ELSE
\STATE \emph{step-wise} $\leftarrow \mathit{false}$
\STATE $b \sim P_b$ \COMMENT{Choose behaviour policy} \label{line:decide-pib-trajwise}
\ENDIF \label{line:determine-wise-end}
\STATE $P_{\textrm{wise}}=P_{\textrm{wise}}-\upsilon$ \COMMENT{Decrease the probability of step-wise} \label{line:decrease-pwise}
        \FOR{each environment step} 
        \IF{\emph{step-wise}} \label{line:decide-pib-stepwise-begin}
        \STATE $b \sim P_b$ \COMMENT{Choose behaviour policy} \label{line:decide-pib-stepwise-end}
        \ENDIF 
        \ENDFOR 
        \STATE $P_{\pi}=P_{\pi}-\upsilon$ \label{line:decrease-ppi} \COMMENT{Decrease the probability of using $\pi^\src$}
        \ENDFOR
\end{algorithmic}

\end{algorithm}
\begin{algorithm}[tbp]
\caption{Composite sampling (control-switch)}
\label{alg:composite_sampling-cs}
\textbf{Input}: $\pi^\src$, $\pi^\targ$\\
\textbf{Initialize}: $\mathcal{D}^\src \leftarrow \emptyset$, $\mathcal{D}^\targ \leftarrow \emptyset$\\
\textbf{Output}: $\pib$
\begin{algorithmic}[1]
\FOR{each iteration}
\STATE $b \leftarrow \targ$ \COMMENT{Start sampling from the student} \label{line:StuSam-cs}
\STATE \emph{control-switch}$(t)$ $\leftarrow \mathit{false}$
        \FOR{each environment step} 
            \STATE $a_t \sim \pib(\cdot \mid s_t)$
            \STATE $ E \leftarrow (s_t,a_t,r^\targ_t,r^\src_t,c_t,\mathcal{I}_t,s_{t+1})$ \COMMENT{Generate experience}
            \IF{$b = \src$} \label{line:save_experience_begin}
            \STATE $\mathcal{D}^\src\leftarrow  \mathcal{D}^\src \cup \{E \}$ \COMMENT{Save the guide samples}
            \ELSE
            \STATE $\mathcal{D}^\targ\leftarrow  \mathcal{D}^\targ \cup \{E \}$ \COMMENT{Save the student samples}
            \ENDIF \label{line:save_experience_end}
            \IF{$\neg$ \emph{control-switch}$(t)$ $\wedge$ $c_t > 0$} \label{line:switch_control_begin}
            \STATE $b \leftarrow \src$ \COMMENT{Switch behaviour policy}
            \STATE \emph{control-switch}$(t)$ $\leftarrow \mathit{true}$ 
                \label{line:switch_control_end}
            \ENDIF 
        \ENDFOR
        \ENDFOR
\end{algorithmic}
\end{algorithm}

\newpage

\section{Ablation Study }
\label{app:ablation-study}

\begin{figure}[H]
\centering
\subfigure[Behaviour policy]{\label{at-a}
\centering
\begin{minipage}[t]{0.7\linewidth}
\centering
\includegraphics[width=0.5\textwidth]{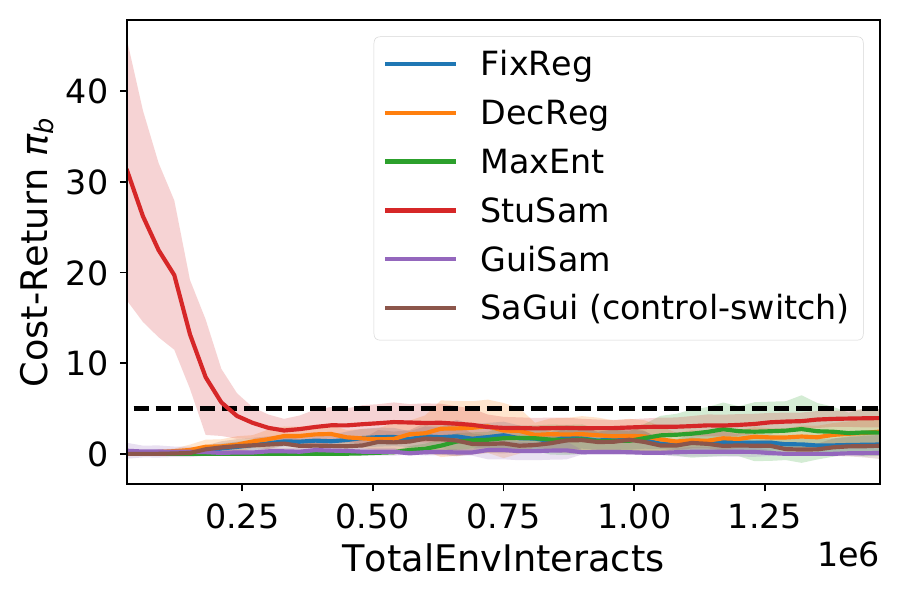}\includegraphics[width=0.5\textwidth]{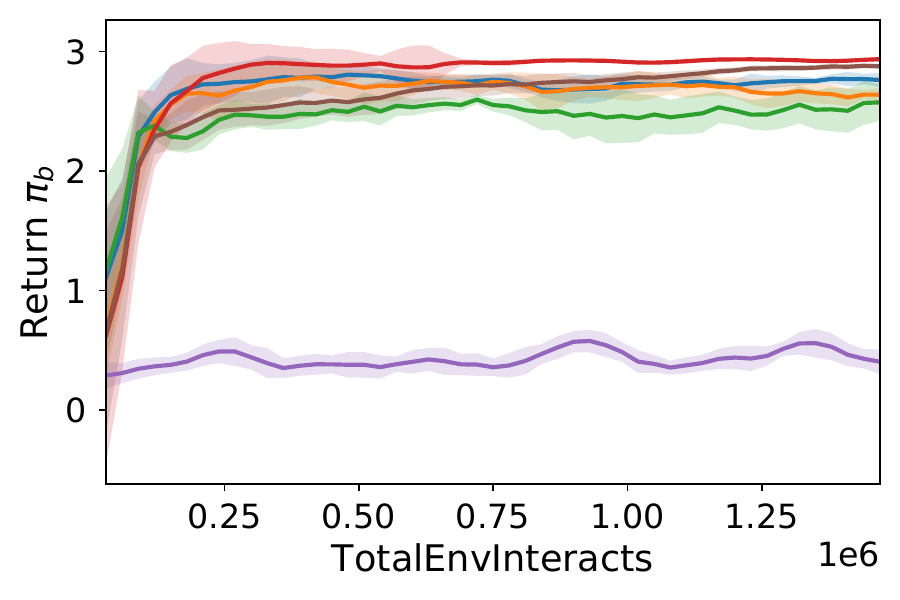}
\end{minipage}
}

\subfigure[Target policy]{\label{at-b}
\begin{minipage}[t]{0.7\linewidth}
\centering
\includegraphics[width=0.5\textwidth]{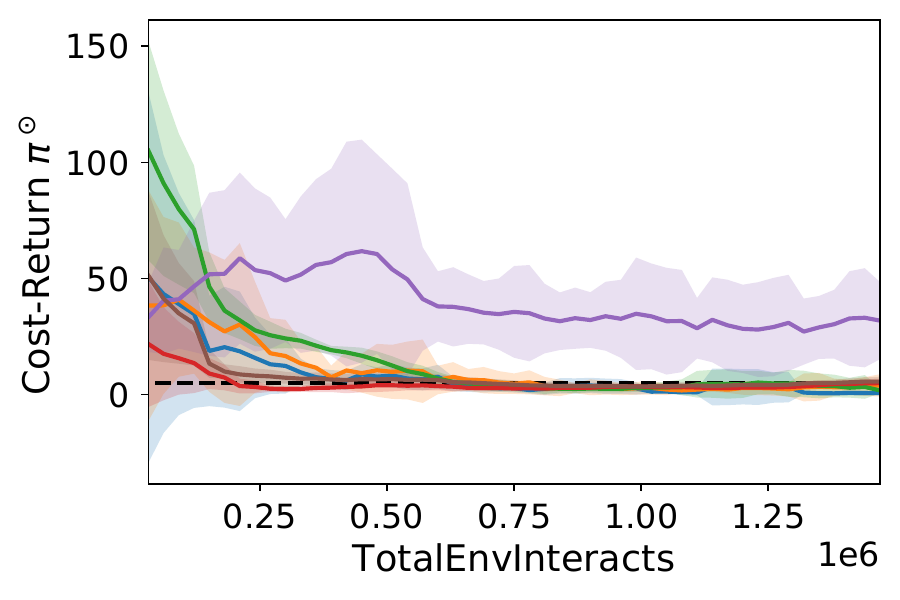}\includegraphics[width=0.5\textwidth]{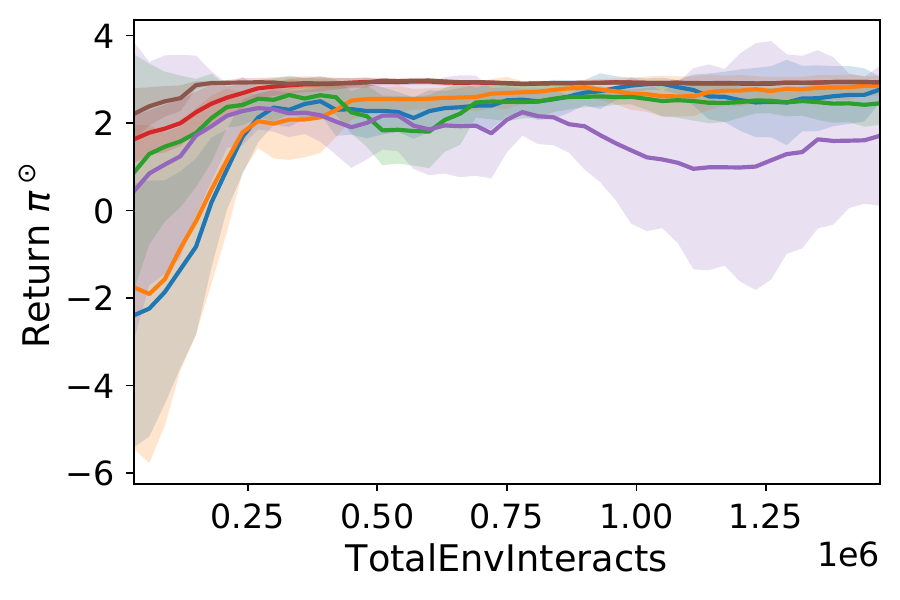}
\end{minipage}
}%
\caption{Ablation study in \textit{Static} showing the safety and performance of the behaviour policy (a) and target policy (b). The black dashed line indicates the safety threshold.}
\label{fig:AblationStudy}
\end{figure}

 \newpage 
\section{Evaluation of the target policy}
\label{app:target-policy}
\paragraph{Comparison with baselines}
In \cref{DuringTraining}, we evaluate the behaviour policy $\pib$ for all algorithms: CPO, SAC-$\lambda$, \textsc{CPO-pre}, \textsc{SAC-$\lambda$-pre}, EGPO, and \textsc{\ours{}}. So, in \cref{DuringTrainingTarget}, we show how their resulted target policy will perform during training. In all these algorithms, \textsc{\ours{}} (control-switch) is the only one that can find a safe optimal target policy in all environments. However, \textsc{\ours{}} (linear-decay) cannot achieve similar performance, especially in \textit{Semi-dynamic} and \textit{Dynamic}. We infer that \textsc{\ours{}} (linear-decay) lack samples from the target policy, especially at the early stage of training. The behaviour policy of EGPO (with benefits from the targeted expert policy) has outstanding performance during training (\cref{DuringTraining}), but EGPO fails to find a safe target policy finally. As to the pre-training baselines, \textsc{CPO-pre} and \textsc{SAC-$\lambda$-pre} do not attain obvious improvement compared to CPO and SAC-$\lambda$ that are trained from scratch. Instead, pre-training may have some negative impacts on getting a good target policy. The only exception is that \textsc{CPO-pre} is largely improved in the relatively simple environment \textit{Static}.

\begin{figure*}[h]

\subfigure[Static]{\label{DuringTrainingTarget-a}
\begin{minipage}[t]{0.32\linewidth}
\includegraphics[width=\textwidth]{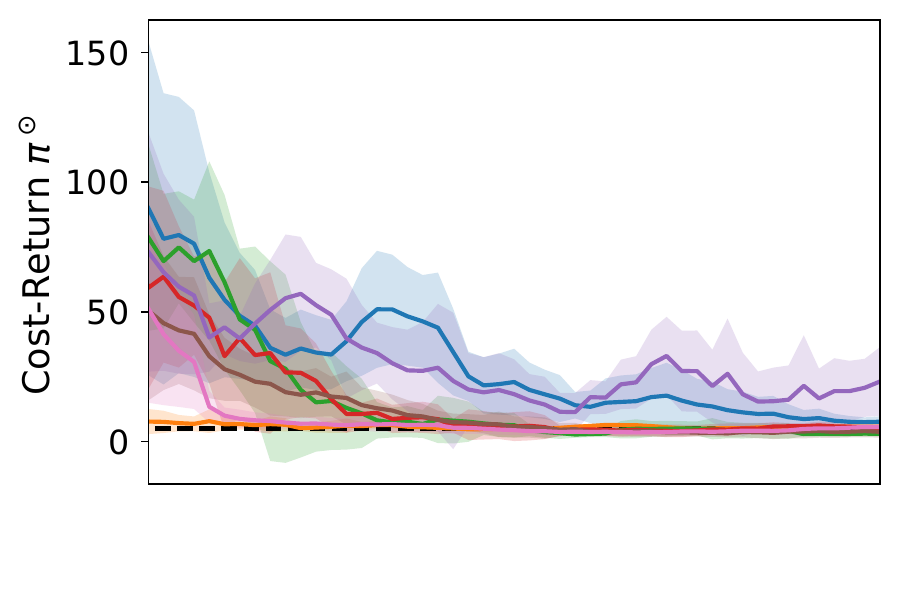}\\
\includegraphics[width=\textwidth]{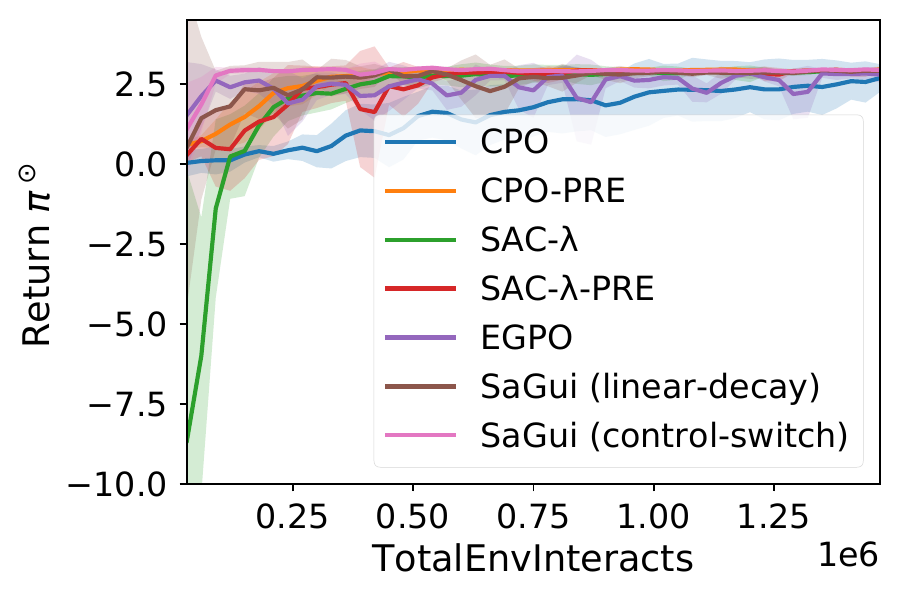}
\end{minipage}
}%
\subfigure[Semi-Dynamic]{\label{DuringTrainingTarget-b}
\begin{minipage}[t]{0.32\linewidth}
\includegraphics[width=\textwidth]{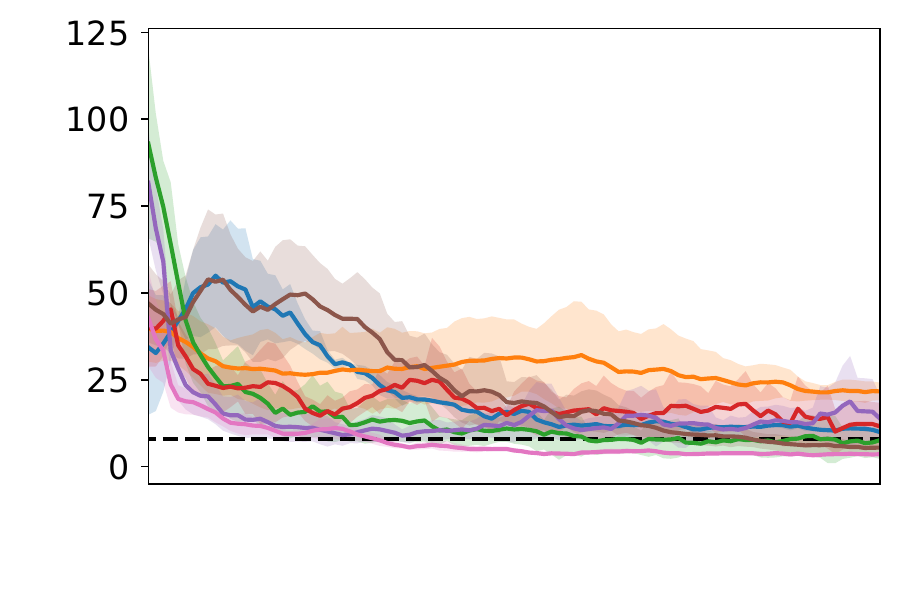}\\
\includegraphics[width=\textwidth]{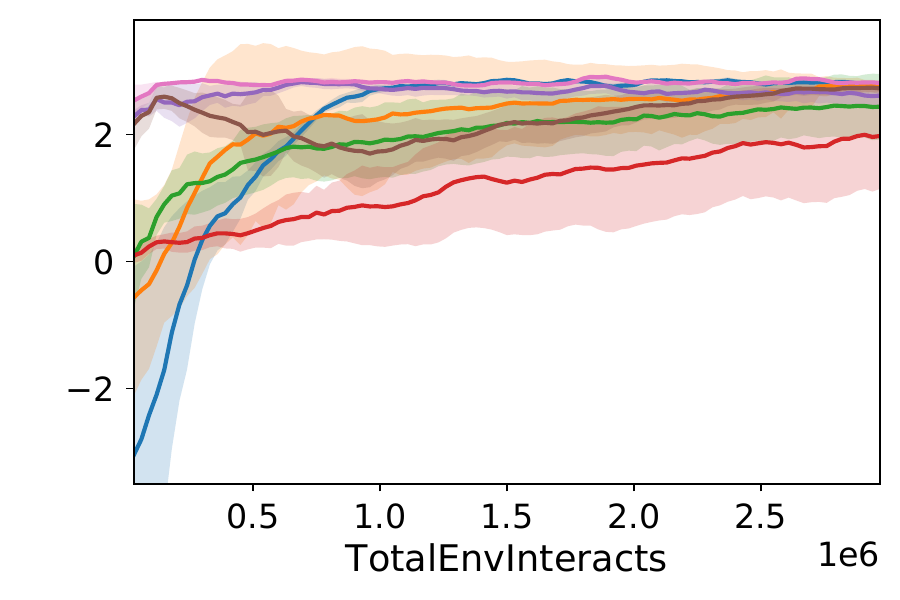}

\end{minipage}
}%
\subfigure[Dynamic]{\label{DuringTrainingTarget-c}
\begin{minipage}[t]{0.32\linewidth}
\includegraphics[width=\textwidth]{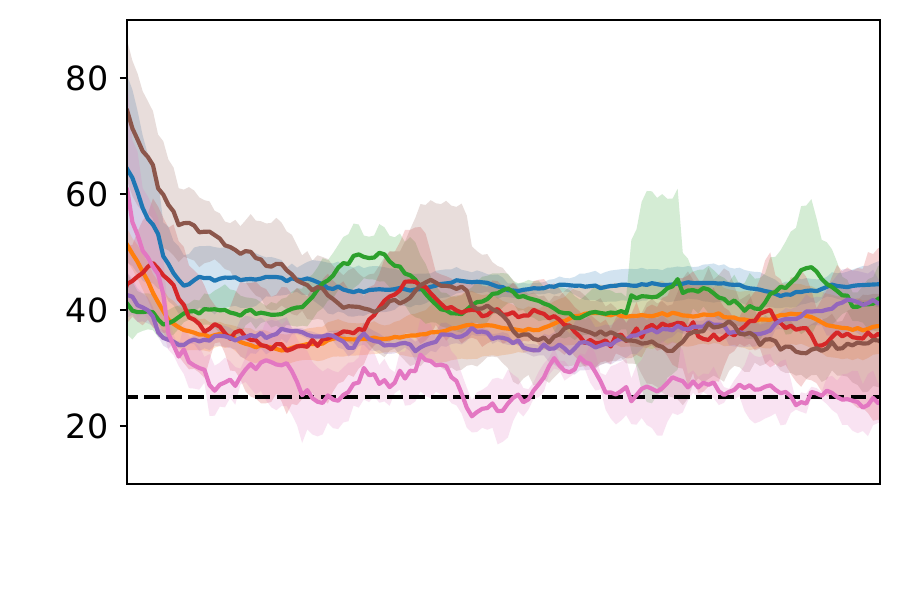}\\
\includegraphics[width=\textwidth]{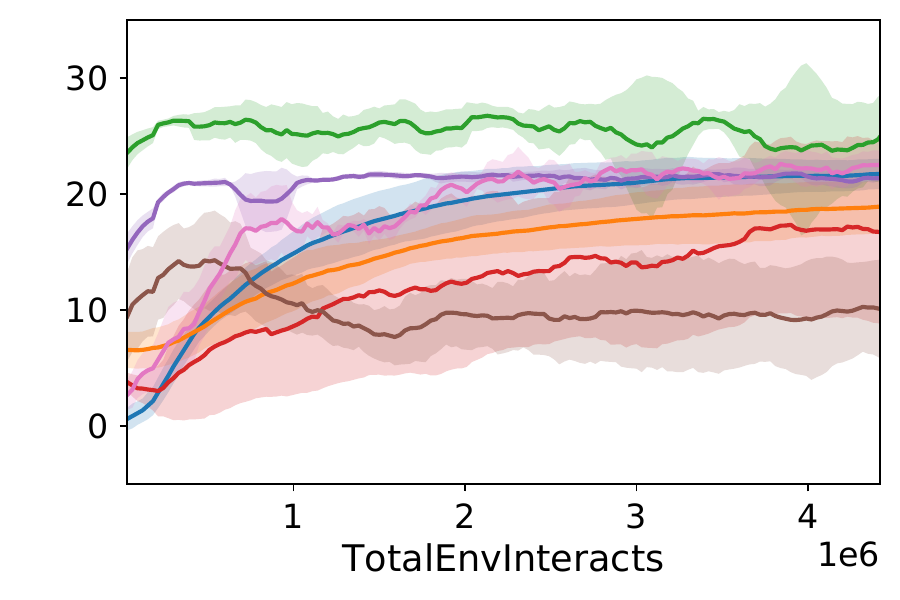}
\end{minipage}
}%
\caption{Evaluation of $\pi^\targ$ for CPO, \textsc{CPO-pre}, SAC-$\lambda$, \textsc{SAC-$\lambda$-pre}, EGPO, \textsc{\ours{}} (linear-decay), and \textsc{\ours{}} (control-switch) over ten seeds. The solid lines are the average of all runs, and the shaded area is the standard deviation. The black dashed lines indicate the safety thresholds.}
\label{DuringTrainingTarget}
\end{figure*}

\section{Hyperparameters}
\label{app:hyper}
We list the hyperparameters used in \textsc{\ours{}},
which are summarized in Table~\ref{hyper}. As to the baselines, we use the default hyperparameters in \url{https://github.com/openai/safety-starter-agents}. 
All runs in the experiment use separate feedforward Multilayer Perceptron (MLP) actor and critic networks.
The size of the neural network (all actors and critics of the algorithms) depend on the complexity of the tasks.
We use a replay buffer of size $10^6$ for each off-policy algorithm to store the experience.
The discount factor is set to be $\gamma = 0.99$, the target smoothing coefficient is set to be $0.005$ to update the target networks, and the learning rate to $0.001$.
The clipping intervale hyper-parameters $[\mathcal{I}_l, \mathcal{I}_u]$ is set to $[0.1, 2.0]$, while the sampling probabilities $P_{\mathcal{D}^\src}$ and $P_{\mathcal{D}^\targ}$ are set to $0.25$ and $0.75$, respectively.
The maximum episode length is 1000 steps in all experiments.
We set the safety constraint $d$ based on the problem.
The rest of the hyperparameters are explained in the Empirical Analysis part of the paper. 
All experiments are performed on an
Intel(R) Xeon(R) CPU@3.50GHz with 16 GB of RAM.

\begin{table}
    \centering
    \begin{tabular}{lrrrr}
         \toprule
        Parameter& Static& Semi-Dynamic& Dynamic & Note\\
        \midrule
        Size of networks& $(32,32)$& $(64,64)$ & $(256, 256)$ \\
        Size of replay buffer& $10^6$&$10^6$&$10^6$&  $|\mathcal{D}|$\\
        Batch size& 32& 64& 256 \\
        Number of epochs& $50$& $100$& $150$& \\
        Safety constraint& $5$& $8$&$25$& $d$ \\
        \bottomrule 
    \end{tabular}
    \caption{Summary of hyperparameters in \textsc{\ours{}}.}
    \label{hyper}
\end{table}

\paragraph{Safety-mapping function.}
The state spaces of the source and target task differ by the presence of the LiDAR observation of the target location.
While the source task only has a safety-related signal $x_c$, the target task has an additional goal-related signal $x_r$.
Thus, following the definition in Section~\ref{sec:problem-setting}, we can map the target state $[x_c, x_r]$ to the source state ignoring the target-related signal: $\Xi([x_c, x_r]) = [x_c]$.

\section{Expert Guided Policy Optimization}
\label{app:egpo}
We also compare our algorithms to an Expert-in-the-loop RL method called Expert Guided Policy Optimization (EGPO) that incorporates a well-performing expert policy as a demonstrator as well as a safety guardian \cite{peng2022safe}.
However, EGPO constrains safety behaviours at each timestep, which is different from our safety defined on long-term cost-return.
In terms of the safe guide, EGPO assumes the access to the well-performing expert policy, but our safe guide is task-agnostic.
Thus, the expert in EGPO depends on the target task and does not undertake the task of exploration, while our safe guide can be useful for different reward functions and enhance the exploration capabilities of the student.
Even though, EGPO can be easily adapted to our setting.
The constraint of EGPO on the guardian intervention frequency can be directly transferred to be our safety constraint.
Also, we do not minimize intervention anymore.
Once the EGPO agent starts to take unsafe actions, the expert policy can take over the control until the end.

\end{document}